\DeclareFontFamily{U}{FdSymbolA}{}
\DeclareFontShape{U}{FdSymbolA}{m}{n}{
    <-> s * [1] FdSymbolA-Book
}{}
\DeclareFontShape{U}{FdSymbolA}{m}{b}{
    <-> s * [1] FdSymbolA-Medium
}{}
\DeclareSymbolFont{fdsymbols}{U}{FdSymbolA}{m}{n}
\DeclareMathSymbol{\medtriangleright}{\mathbin}{fdsymbols}{86}
\DeclareMathSymbol{\medtriangleup}{\mathbin}{fdsymbols}{87}
\DeclareMathSymbol{\medtriangleleft}{\mathbin}{fdsymbols}{88}
\DeclareMathSymbol{\medtriangledown}{\mathbin}{fdsymbols}{89}
\newcommand{\heart}{$\;\!$\usym{2665}}
\theoremstyle{plain}
\newtheorem{theorem}{Theorem}[section]
\theoremstyle{definition}
\newtheorem{definition}[theorem]{Definition}
\theoremstyle{remark}
\newcommand{\ms}[2]{{#1\tiny{$\pm$#2}}}
\newcommand{\ums}[2]{{\underline{#1}\tiny{$\pm$#2}}}
\newcommand{\ubms}[2]{{\underline{\textbf{#1}}\tiny{$\pm$#2}}}
\newcommand{\um}[1]{{\underline{#1}}}
\newcommand{\ubm}[1]{{\underline{\textbf{#1}}}}
\newcommand{\figref}[1]{Figure~\ref{#1}}%
\newcommand{\tabref}[1]{Table~\ref{#1}}%
\renewcommand{\eqref}[1]{Eq.~(\ref{#1})}
\DeclareRobustCommand\onedot{\futurelet\@let@token\mathbfv@onedotaux}
\def\mathbfv@onedotaux{\ifx\@let@token.\else.\null\fi\xspace}
\def\eg{\emph{e.g}\onedot} 
\def\ie{\emph{i.e}\onedot} 
 \def\vs{\emph{vs}\onedot}
\def\wrt{w.r.t\onedot}
\def\etal{\emph{et al}\onedot}
\def\bd{{\color{blue}{$\vardiamondsuit$}}}
\def\rh{{\color{red}{$\varheartsuit$}}}
\def\cc{{\color{cyan}{$\clubsuit$}}}
\def\os{{\color{orange}{$\spadesuit$}}}
\newcommand{\yifei}[1]{{\color{black}#1}}
\definecolor{beaublue}{rgb}{0.86274,0.90980,0.98431}
\title{Understanding and Mitigating Hyperbolic Dimensional Collapse in Graph Contrastive Learning}
\author{%
Yifei Zhang$^{\ddagger}$ Hao Zhu$^{\heartsuit}$ Menglin Yang$^{\dagger}$ Jiahong Liu$^{\ddagger}$ Rex Ying$^{\dagger}$ Irwin King$^{\ast,\,\ddagger}$ Piotr Koniusz\thanks{Corresponding authors.$\;\;$PK also in charge of math/theorems.}$\;^{\,,\heartsuit,\,\S}$\\\\[-5pt]
$^{\ddagger}$Chinese University of Hong Kong $\;\;$
$^{\dagger}$Yale University \\
$^\S$Australian National University $\;\;$
$^{\heartsuit}$Data61$\!${\color{red}\heart}CSIRO\\\\[-5pt]
{\tt\small \{yfzhang, jhliu, king\}@cse.cuhk.edu.hk, menglin.yang@outlook.com, rex.ying@yale.edu,}\\
{\tt\small \{allen.zhu, piotr.koniusz\}@data61.csiro.au}
}
\begin{document}

\maketitle

\vspace{-0.6cm}
\begin{abstract}
Learning generalizable self-supervised graph representations for downstream tasks is challenging. To this end, Contrastive Learning (CL) has emerged as a leading approach. The embeddings of CL are arranged on a hypersphere where similarity is measured by the cosine distance. However, many real-world graphs, especially of hierarchical nature, cannot be embedded well in the Euclidean space. Although the hyperbolic embedding is suitable for hierarchical representation learning, naively applying CL to the hyperbolic space may result in the so-called dimension collapse, \ie, features will concentrate mostly within few density regions, leading to poor utilization of the whole feature space. Thus, we propose a novel contrastive learning framework to learn high-quality graph embeddings in hyperbolic space. Specifically, we design the alignment metric that effectively captures the hierarchical data-invariant information, as well as we propose a substitute of the uniformity metric to prevent the so-called dimensional collapse. We show that in the hyperbolic space one has to address the leaf- and height-level uniformity related to properties of trees. In the ambient space of the hyperbolic manifold these notions translate into imposing an isotropic ring density towards boundaries of Poincaré ball. Our experiments support the efficacy of our method. 
\end{abstract}

\section{Introduction}
\setlength{\columnsep}{14pt}%

\begin{wrapfigure}{r}{0.4\linewidth}
\vspace{-2.0cm}
\begin{subfigure}[t]{0.49\linewidth}
\includegraphics[trim={0.2cm 0.2cm 0.2cm 0.2cm}, clip=true, width=\linewidth]{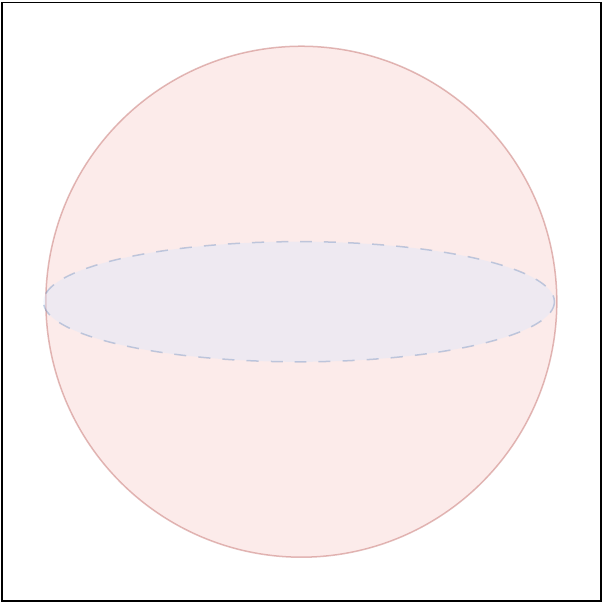}
\end{subfigure}
\begin{subfigure}[t]{0.49\linewidth}
\includegraphics[trim={0.2cm 0.2cm 0.2cm 0.2cm}, clip=true, width=\linewidth]{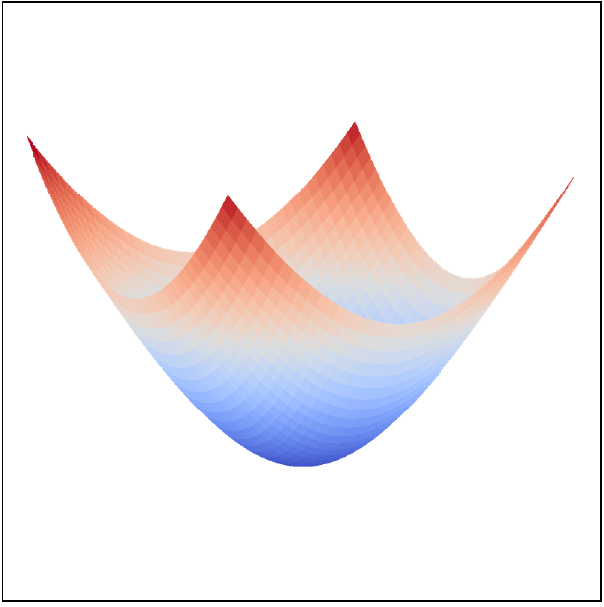}
\end{subfigure}
\vspace{-0.7cm}
\caption{Hypersphere \vs. Hyperbola (viewed in the ambient space)\label{fig:sphere_vs_hyperboola}.}
\vspace{0.2cm}
\begin{subfigure}[t]{0.49\linewidth}
\includegraphics[trim={0.2cm 0.2cm 0.2cm 0.2cm}, clip=true, width=\linewidth]{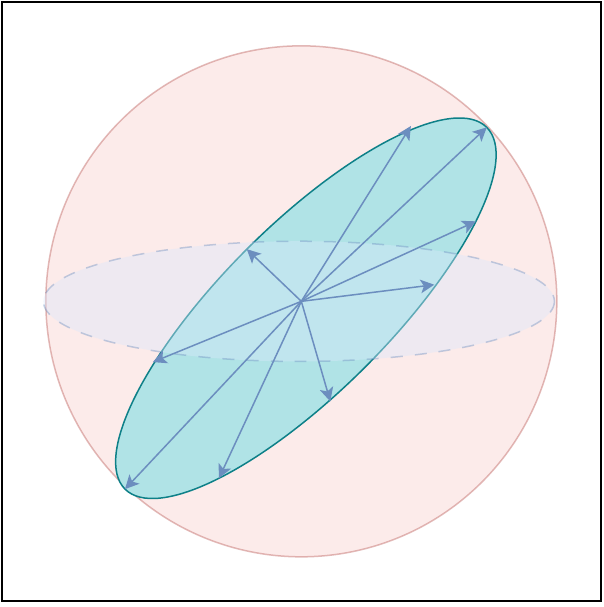}
\end{subfigure}
\begin{subfigure}[t]{0.49\linewidth}
\includegraphics[trim={0.2cm 0.2cm 0.2cm 0.2cm}, clip=true, width=\linewidth]{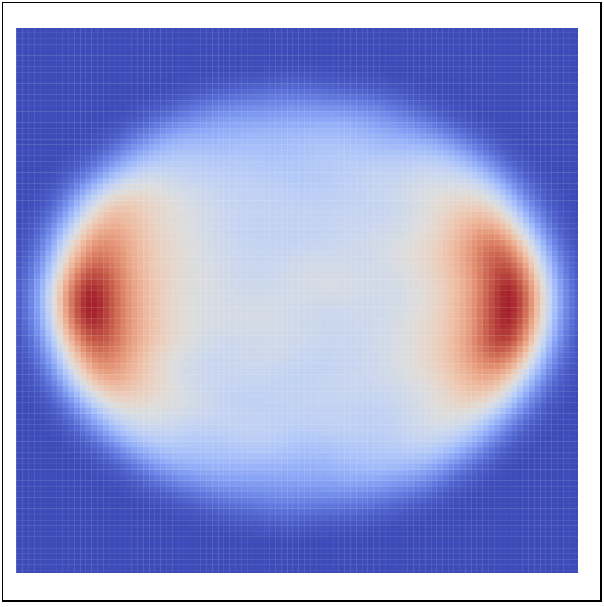}
\end{subfigure}
\vspace{-0.7cm}
\caption{Dimensional Collapse: (left) the $\ell_2$ ball \vs (right) the Poincaré disk (note the density collapse (non-red)).\label{fig:dc}}
\vspace{-0.8cm}
\end{wrapfigure}

Learning features in hyperbolic spaces has drawn a lot of interest~\cite{bronstein2017geometric,sun2021hgcf}. Hyperbolic spaces are characterized by their negative curvature,
where  
the distance between two points can grow exponentially 
in contrast to the Euclidean space  with the linear distance growth. Compared with the Euclidean space, hyperbolic spaces have  numerous advantages: 1) Geodesic distance measure~\cite{bronstein2017geometric},
2) Better representation of hierarchical structures~\cite{adcock2013tree},
3) Increased capacity~\cite{bronstein2017geometric}, and 4) Improved generalization~\cite{krioukov2010hyperbolic}. 
Hyperbolic space helps embed hierarchical graph structures \cite{bronstein2017geometric}. However, past
works~\cite{chen2020simple, chen2020improved, gao2021simcse,zhang2022costa,wang2020understanding, zhang2024geometric} have not been able to successfully apply CL for hyperbolic graph embeddings. Despite promising results of CL \cite{chen2020simple,zhang2022costa,gao2021simcse, zhang2023mitigating}, such CL models are limited in their ability to model complex patterns that exhibit hierarchical structures. Specifically, in typical CL, embeddings are arranged on a hypersphere in the Euclidean space and compared by 
the Cosine distance (\figref{fig:sphere_vs_hyperboola} ({\em left})). The hyperbolic space is a natural candidate for modeling hierarchical structures \cite{bronstein2017geometric, krioukov2010hyperbolic} for self-supervised learning via CL.

\begin{wrapfigure}{r}{0.4\linewidth}
\vspace{-0.4cm}
    \includegraphics[width=0.4\textwidth]{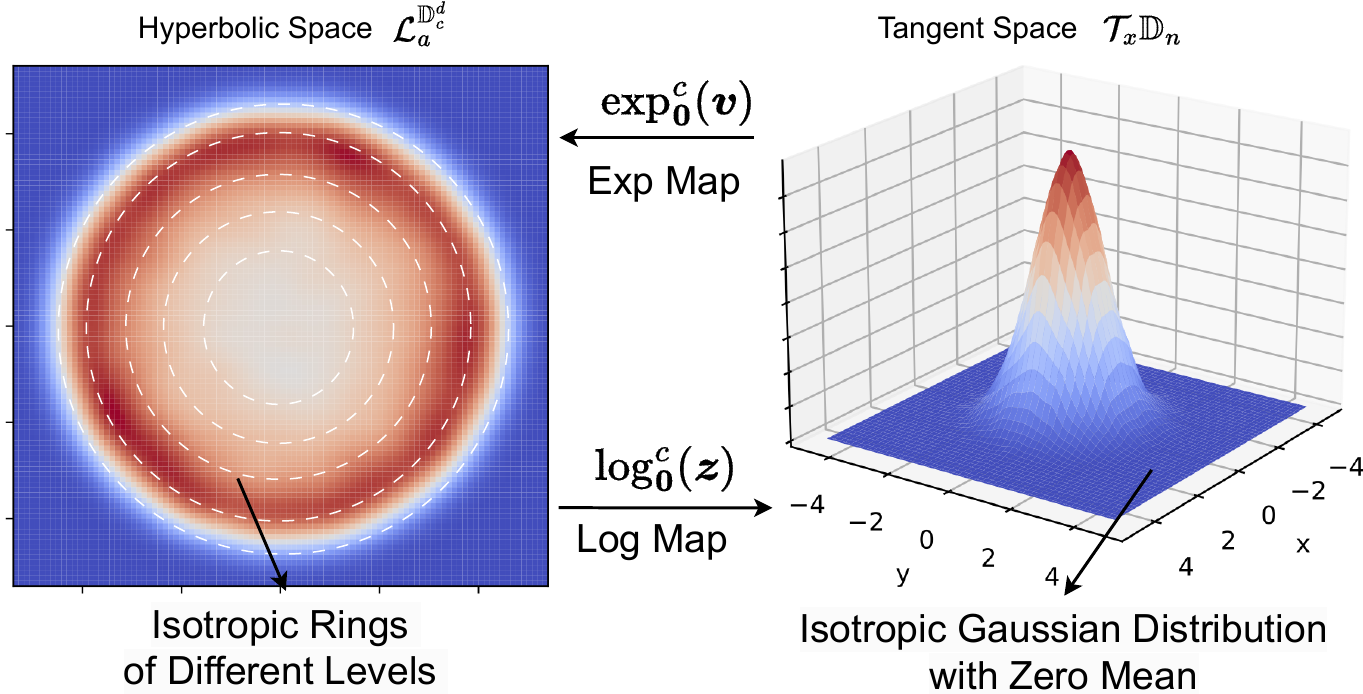}
    \vspace{-0.3cm}
    \caption{Mapping isotropic Normal distribution from the tangent plane at $\mathbf{0}$ to the ambient space of the hyperbolic manifold results in the isotropic ring-shaped density, known as the Wrapped Normal distribution.
    \label{fig:hyper_tanh}}
    \vspace{-0.3cm}
\end{wrapfigure}

Dimensional collapse (DC)~\cite{liu2019spectral}, shown in \figref{fig:dc}, is a phenomenon occurring in CL 
where the learned embeddings occupy only small parts of the feature space. As the  embedding  dimension is
usually high, the DC prevents learning  
   diverse information in  high-dimensional space, limiting its representativeness.

Wang \etal~\cite{wang2020understanding} note that the quality of representation produced via CL is characterized by two key factors: alignment and uniformity~\cite {wang2020understanding}. Good alignment ensures the preservation of distinct information contained by samples, while excessive alignment may lead to dimensional collapse where features become overly concentrated on specific points or subspace~(\figref{fig:dc})~\cite{jing2021understanding,grill2020bootstrap, zhang2023spectral}. To counteract this effect, the so-called uniformity constraints are often imposed to encourage samples to span the entire space evenly to increase the feature diversity. In the Euclidean space, maximizing the pairwise distance between samples on the hypersphere leads to a roughly uniform distribution on the surface. 

However, since the hyperbolic space is non-compact and has infinite volume, maximizing the uniformity directly in the hyperbolic space leads to pushing all samples towards the boundary of the Poincaré ball. Thus, uniformity cannot be achieved. Therefore, alignment with and without uniformity results in the Hyperbolic Dimensional Collapse (HDC), which can be characterized by decline of Effective Rank \cite{7098875} of features in the ambient space of the hyperbolic manifold\footnote{The ambient space of the Hyperbolic manifold is the space where the hyperbola lies.}. The HDC is illustrated in Figures \ref{fig:hyper_tree_1}, \ref{fig:hyper_tree_3} \& \ref{fig:hyper_tree_4}.

In this paper, we investigate the phenomenon of HDC. We adopt the Poincaré model and introduce a novel framework called Hyperbolic Graph Contrastive Learning (HyperGCL). The primary goal of HyperGCL is to generate high-quality graph embeddings that avoid the HDC and effectively apply it to various downstream tasks. To this end, we revisit the concept of alignment and uniformity for the Hyperbolic embeddings in Section \ref{sec:prelim}. 

\definecolor{beaublue}{rgb}{0.85, 0.9, 0.95}
\definecolor{blackish}{rgb}{0.2, 0.2, 0.2}

\begin{tcolorbox}[width=1\linewidth, colframe=blackish, colback=beaublue, boxsep=0mm, arc=2mm, left=2mm, right=2mm, top=1mm, bottom=1mm]
The notion of uniformity is only defined for a hypersphere but not hyperbolic manifolds. 
Thus, instead of talking about the ``uniformity'' of hyperbolic space, we  talk about the level-wise isotropic rings in the ambient space of the hyperbolic manifold (\figref{fig:hyper_tanh} (left)). We are interested in imposing high density of features uniformly distributed along the ring  circumference close to the boundary of Poincaré ball. We call it the outer isotropic shell or simply put, isotropic shell. To optimize it, we discover that enforcing an isotropic Normal distribution on the tangent plane at $\mathbf{0}$ to the Poincaré ball results in such an  isotropic shell, as shown in~\figref{fig:hyper_tanh}. We are interested in such an ambient space because it can embed a tree with large depth and uniformly distributed leaves (Fig. \ref{fig:hyper_tree_2}) according to low distortion delaunay-based tree embedding \cite{NEURIPS2019_0ec04cb3}.
\end{tcolorbox}

\vspace{0.2cm}
Below we summarize our contributions as follows:

\vspace{-0.1cm}
\renewcommand{\labelenumi}{\roman{enumi}.}
\begin{enumerate}[leftmargin=0.6cm]
\item We investigate the problem of dimension collapse  in  the hyperbolic contrastive learning, and we associate the dimensional collapse with the tree ``leaf collapse'' and ``height collapse'', describing it from the point of view  that  trees can be embedded in the hyperbolic plane with low distortion \cite{sarkar2011low,ganea2018hyperbolic,NEURIPS2019_0ec04cb3}.
\item We propose a novel graph contrastive learning framework in the hyperbolic space to generate high-quality graph embedding for the various downstream tasks.
\item To alleviate the dimensional collapse, identified/measured as a drop in Efficient Rank (ERank) of features in the ambient space of the hyperbolic manifold, we propose a new isotropic Gaussian loss operating on the tangent space of manifold at $\mathbf{0}$ which forces the desired feature distribution (isotropic shell) in the ambient space of the hyperbolic manifold. 
We show that imposing the isotropic Gaussian loss on the tangent space increases the ERank of feature representations on the tangent plane, and that the ERank measured in the ambient space correlates with the ERank measured on the tangent plane. Maintaining high ERank stops backbone succumbing to DC.
\end{enumerate}

\section{Preliminaries}
\label{sec:prelim}
\noindent \textbf{Notation.} A graph $G$ with node features is represented as $G=(\mathbf{X}, \mathbf{A})$, where $\mathbf{X} \in \mathbb{R}^{N \times d_{x}}$ is the feature matrix with $\boldsymbol{x}_i$ as the feature vector of node $v_i$, and $\mathbf{A} \in \{0,1\}^{n \times n}$ is the adjacency matrix where $A_{ij}=1$ indicates an edge between nodes $i$ and $j$. The degree $d_i$ denotes the number of edges incident to node $i$, and $\mathbf{D}$ is the diagonal degree matrix with $D_{ii}=d_i$. For notation, $\|\boldsymbol{x}\|_{2}$ is the Euclidean norm of vector $\boldsymbol{x}$, $x_i$ is the $i$-th entry of vector $\boldsymbol{x}$, $x{ij}$ is the $(i,j)$-th entry of matrix $\mathbf{X}$, $\operatorname{diag}(\boldsymbol{x})$ creates a diagonal matrix from vector $\boldsymbol{x}$, and $\operatorname{tr}(\mathbf{X})$ is the trace of matrix $\mathbf{X}$.

\vspace{0.1cm}
\noindent\textbf{Alignment \& Uniformity on Hypersphere $\mathcal{S}^{d-1}$. }
Contrastive learning is commonly used to encourage learned feature representations for positive pairs to be similar while separating features from randomly sampled negative pairs. It is known that representations should capture the information shared between positive pairs while remaining invariant to  nuisance/noise factors~\cite{tschannen2019mutual, wu2018unsupervised}. If the representations reside on the hypersphere
$\mathcal{S}^{d-1}$ ($l_2$  ball), Wang \etal \cite{wang2020understanding} argue that the above properties are achieved by optimizing the following objective over positive/negative pairs sampled from the so-called positive  and data distributions,   $p_{\text{pos}}$ and $p_{\text{data}}$:
\begin{equation}
\begin{aligned}
\mathcal{L}_{cont} = \!\!\!\!\underbrace{\underset{(\boldsymbol{x}, \boldsymbol{y}) \sim p_{\text {pos }}}{\!\!\!\!\mathbb{E}}\!\!\!\!\!\|f(\boldsymbol{x})-f(\boldsymbol{y})\|_2^2}_{\mathcal{L}^{\mathbb{R}^d}_{A}} +\underbrace{\log \!\!\!\!\!\underset{\substack{\text { i.i.d. } \\
\boldsymbol{x}, \boldsymbol{y}  \sim p_{\text {data }}}}{\mathbb{E}}\!\!\!\!\!\!\left[e^{-t\|f(\boldsymbol{x})-f(\boldsymbol{y})\|_2^2}\right]}_{\mathcal{L}^{\mathbb{R}^d}_{U}},
\end{aligned}
\label{eq:euc_align_uni}
\vspace{-5px}
\end{equation}
where $f(\cdot)$ denotes the encoder with the $l_2$-normalized output, \ie, $\|f(x)\|_2 = 1$. One can observe that
    (i) \textbf{Alignment} ${\mathcal{L}^{\mathbb{R}^d}_{A}}$ makes two samples of a positive pair to be mapped to nearby feature vectors, and thus be (mostly) invariant to undesired nuisance/noise factors, 
    whereas (ii) \textbf{Uniformity} ${\mathcal{L}^{\mathbb{R}^d}_{U}}$  forces feature vectors be roughly uniformly distributed on the unit ball $\mathcal{S}^{d-1}$, diversifying features. 

\vspace{0.1cm}\noindent\textbf{Dimensional Collapse (DC).} Often referred to as spectral collapse~\cite{liu2019spectral}, DC is common in representation learning as shown in~\figref{fig:dc}. It occurs when the embedding space is dominated by a small number of large singular values, while the remaining singular values decay rapidly as the training progresses. This phenomenon 
limits the representational power of high-dimensional spaces by restricting the diversity of information that can be learned. In the framework of optimizing the alignment and uniformity, optimizing the uniformity helps alleviate the DC by encouraging features to be uniformly distributed in the entire latent space. 

\begin{wrapfigure}{r}{0.4\linewidth}
\vspace{-1.7cm}
    \includegraphics[width=0.4\textwidth]{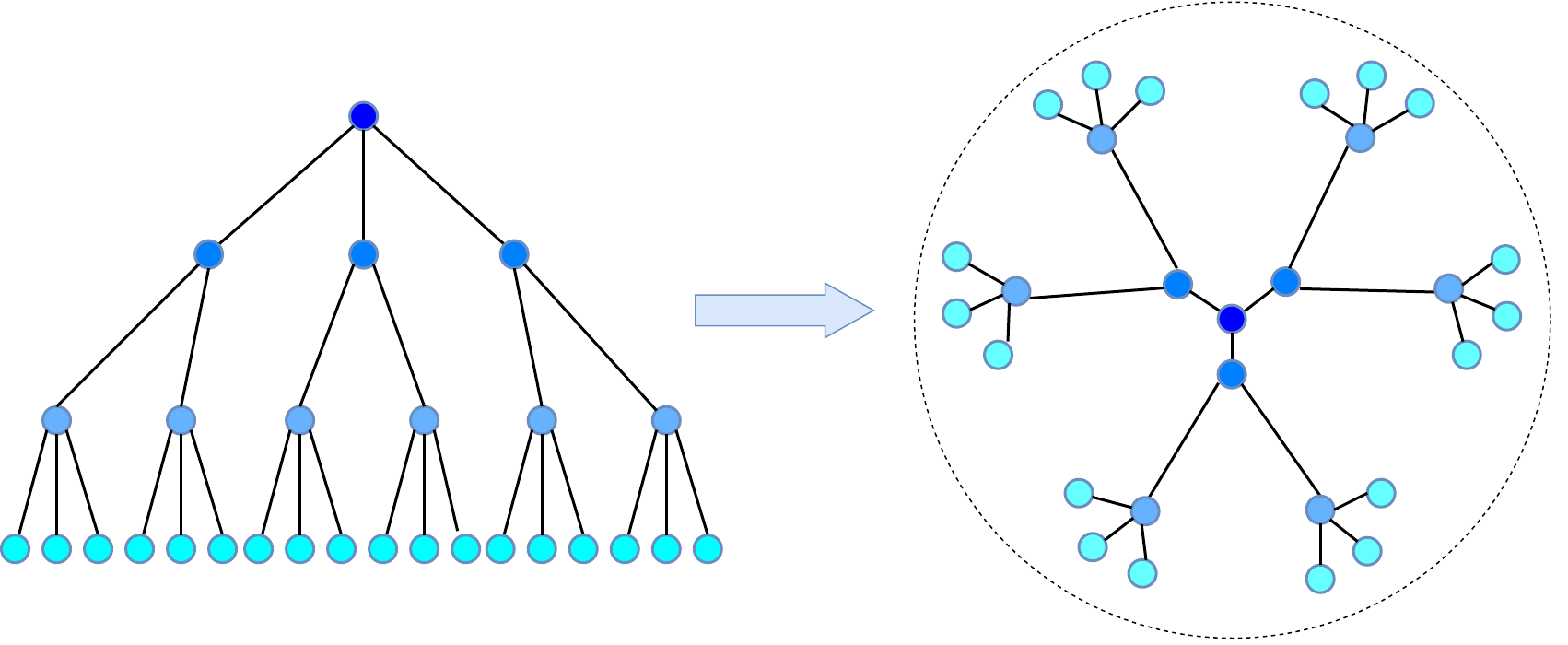}
    \vspace{-0.3cm}
    \caption{
   An example of a low distortion embedding ({\em right}) of a tree ({\em left}) in the hyperbolic plane \cite{sarkar2011low} where distances between direct neighbor nodes are preserved exactly, while non-neighbor distances enjoy at most $1\!+\!\varepsilon$ distortion factor. 
    }
    \label{fig:hypebolic_tree}
 \vspace{-0.8cm}
\end{wrapfigure}

\subsection{Hyperbolic Geometry}
To describe our HyperGCL, we first briefly review the hyperbolic geometry and its  properties that are leveraged by our model.

\vspace{0.1cm}
\noindent\textbf{Poincaré Ball Model.} A hyperbolic space $\mathbb{H}$ is a complete, connected Riemannian manifold with constant negative sectional curvature. Cannon \etal \cite{cannon1997hyperbolic} describe five common hyperbolic models. In this paper, we choose the Poincaré ball $\mathbb{D}_c^d:=\big\{\boldsymbol{p} \in \mathbb{R}^d \mid\|\boldsymbol{p}\|^2<\frac{1}{c}\big\}$ as our basic model~\cite{nickel2017poincare, tifrea2018poincar}, where $\frac{1}{c}>0$ is the radius of the ball. The Poincaré ball is coupled with a Riemannian metric $g_{\mathbb{D}}(\boldsymbol{p})=\frac{4c}{\left(1-\|\boldsymbol{p}\|^2\right)^2}\, g_{\mathbb{E}}$, where $\boldsymbol{p}\in \mathbb{D}_c^d$, whereas $g_{\mathbb{E}}$ is the canonical metric of the Euclidean space. 
The hyperbolic space is globally differomorphic to the Euclidean space.

\begin{definition}[Riemannian distance in $\mathbb{D}_c^d$]
     For $\boldsymbol{p}, \boldsymbol{q} \in \mathbb{D}_c^d$, the Riemannian distance on the Poincaré ball induced by its metric $g_{\mathbb{D}}$ is defined as $D_{c}(\boldsymbol{p}, \boldsymbol{q})=\frac{2}{\sqrt{c}} \tanh ^{-1}\left(\sqrt{c}{\|-p \oplus q\|_2}\right)$ where $\oplus$ is the Möbius addition and it is clearly differentiable.
\end{definition}
\vspace{-0.1cm}
\begin{definition}[Tangent Space]
 The tangent space $\mathcal{T}_{\boldsymbol{x}} \mathbb{D}^d_c\left(\boldsymbol{x} \in \mathbb{D}_c^d\right)$ is defined as the first-order approximation of $\mathbb{D}^d_c$ around ponit $\boldsymbol{x}$ : $\mathcal{T}_{\boldsymbol{x}} \mathbb{D}^d_c:=\big\{\boldsymbol{v} \in \mathbb{R}^{d+1}:\langle\boldsymbol{v}, \boldsymbol{x}\rangle=0\big\}$.
\end{definition}

To perform operations in the hyperbolic space, the bijective map from $\mathbb{R}^d$ to $\mathbb{D}_c^d$ maps Euclidean vectors to the hyperbolic space. 
The so-called exponential map performs such a mapping (the logarithmic map performs the inverse mapping). 
\vspace{-0.1cm}
\begin{definition}[Exponential/Logarithmic Map]
The exponential map $\exp _{\boldsymbol{x}}^c(\cdot)$ is a function from $T_{\boldsymbol{x}} \mathbb{D}_c^d \cong \mathbb{R}^d$ to $\mathbb{D}_c^d$. The logarithmic map $\log _{\boldsymbol{x}}^c(\cdot)$ maps from $\mathbb{D}_c^d$ to $T_{\boldsymbol{x}} \mathbb{D}_c^d$. These maps are defined as:
\begin{equation}
\begin{aligned}
        \exp _{\boldsymbol{x}}^c(\boldsymbol{v})\!&=\!\boldsymbol{x} \oplus\Big(\tanh \Big(\frac{\sqrt{c}\lambda_{\boldsymbol{x}}^c}{2}\|\boldsymbol{v}\|\Big) \frac{\boldsymbol{v}}{\sqrt{c}\|\boldsymbol{v}\|}\Big) \quad\text{ and } \\
    \log _{\boldsymbol{x}}^c(\boldsymbol{y})\!&=\!\frac{2}{\sqrt{c} \lambda_{\boldsymbol{x}}^c} \tanh^{-1}\big(\sqrt{c}\left\|-\boldsymbol{x} \oplus \boldsymbol{y}\right\|\big) \frac{-\boldsymbol{x} \oplus \boldsymbol{y}}{\left\|-\boldsymbol{x} \oplus \boldsymbol{y}\right\|},
\end{aligned}
\end{equation}
where $\lambda^c_{\boldsymbol{x}}=\frac{2}{1-c\|\boldsymbol{x}\|^2}$ is the conformal factor that scales the local distances and $\|\cdot\|$ is the $\ell_2$ norm.
\end{definition}
We use  $\exp _{\boldsymbol{0}}^c(\cdot)$ and $\log_{\boldsymbol{x}}^c(\cdot)$ to transition between the Euclidean and Poincaré ball representations. 

\vspace{0.1cm}
\noindent
\textbf{Learning Hierarchical Representations with Poincaré Embeddings.}
Hyperbolic spaces have emerged as a compelling alternative for capturing hierarchical structures within textual and graph data \cite{paeng2011brownian,nickel2017poincare,NEURIPS2019_0ec04cb3}. Hyperbolic spaces offer a continuous framework akin to trees, as depicted in \figref{fig:hypebolic_tree} where trees can be seamlessly embedded with a minimal distortion (factor $1\!+\!\varepsilon$) into the Poincaré disc model~\cite{sarkar2011low}. The hyperbolic manifold surface 
area exhibits an exponential growth relative to its radius (viewed from the surface of the Poincaré disc), mirroring the proliferation of leaves as the depth of tree increases. Moreover, the operations on  hyperbolic spaces are differentiable and thus applicable to deep learning.

\section{Methodology}
\subsection{Motivation}

%
\begin{wrapfigure}{r}{0.4\linewidth}
\vspace{-2.7cm}
    \centering
\begin{subfigure}{0.48\linewidth}
        \centering
        \includegraphics[width=1\linewidth]{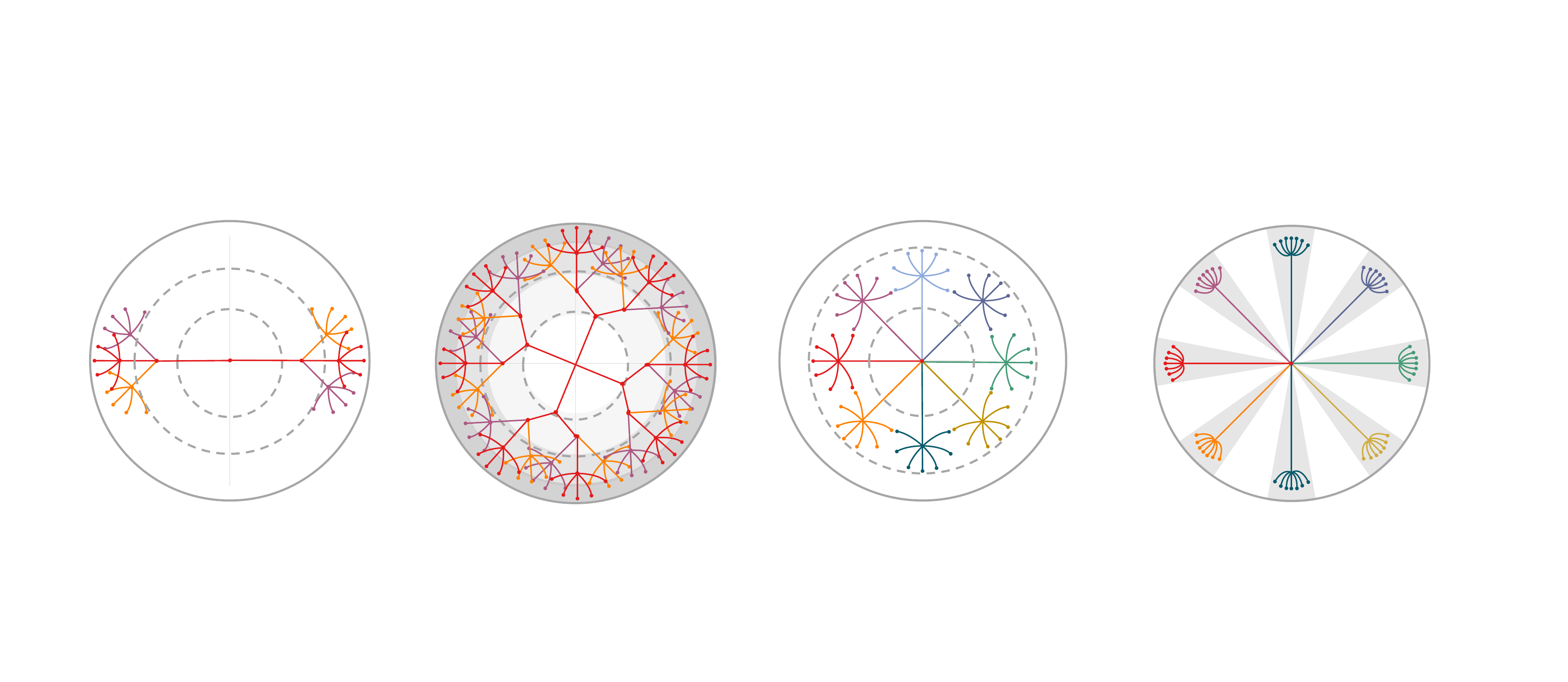}
        \caption{Leaf collapse}
        \label{fig:hyper_tree_1}
        \vspace{0.2cm}
\end{subfigure}
\hspace{1pt}
\begin{subfigure}{0.48\linewidth}
        \centering
    \includegraphics[width=1\linewidth]{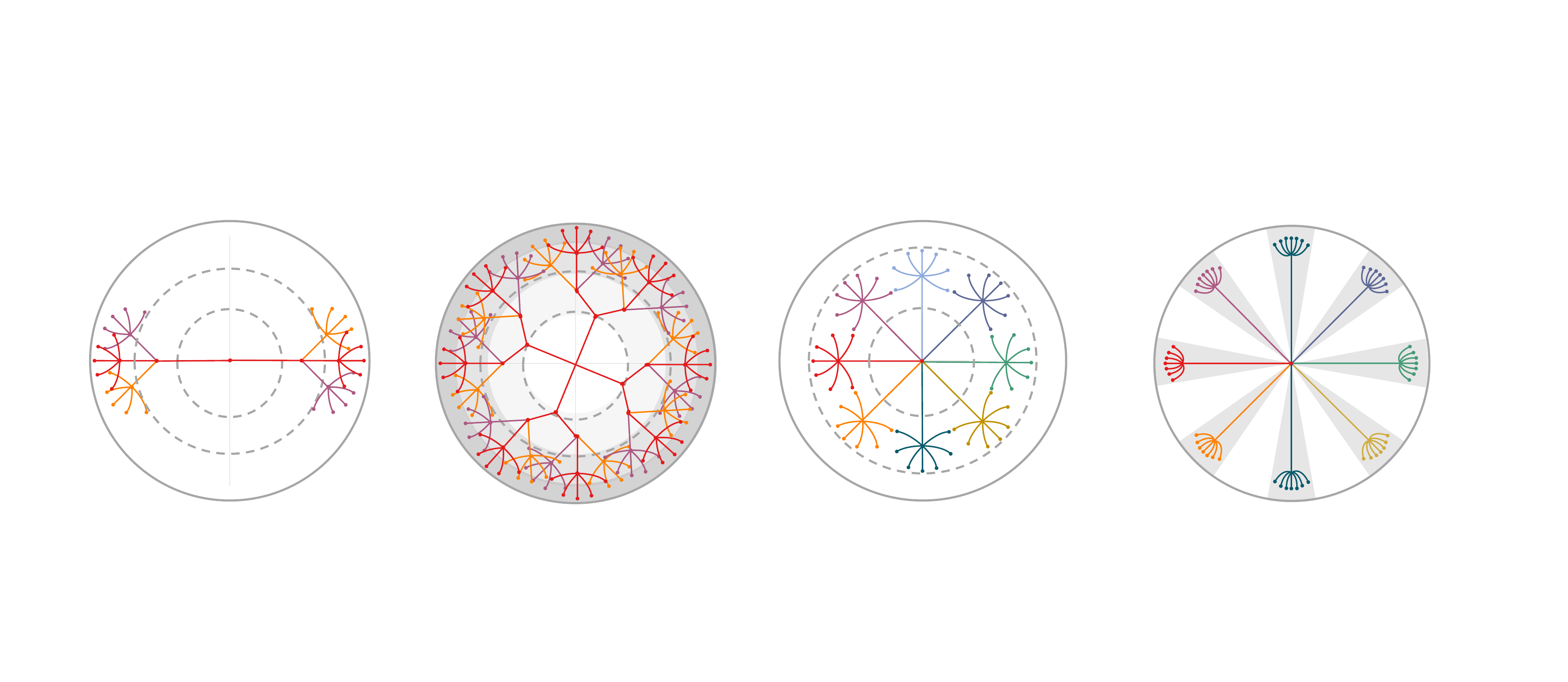}
        \caption{Height collapse}
        \label{fig:hyper_tree_3}
        \vspace{0.2cm}
\end{subfigure}
\begin{subfigure}{0.48\linewidth}
        \centering
    \includegraphics[width=1\linewidth]{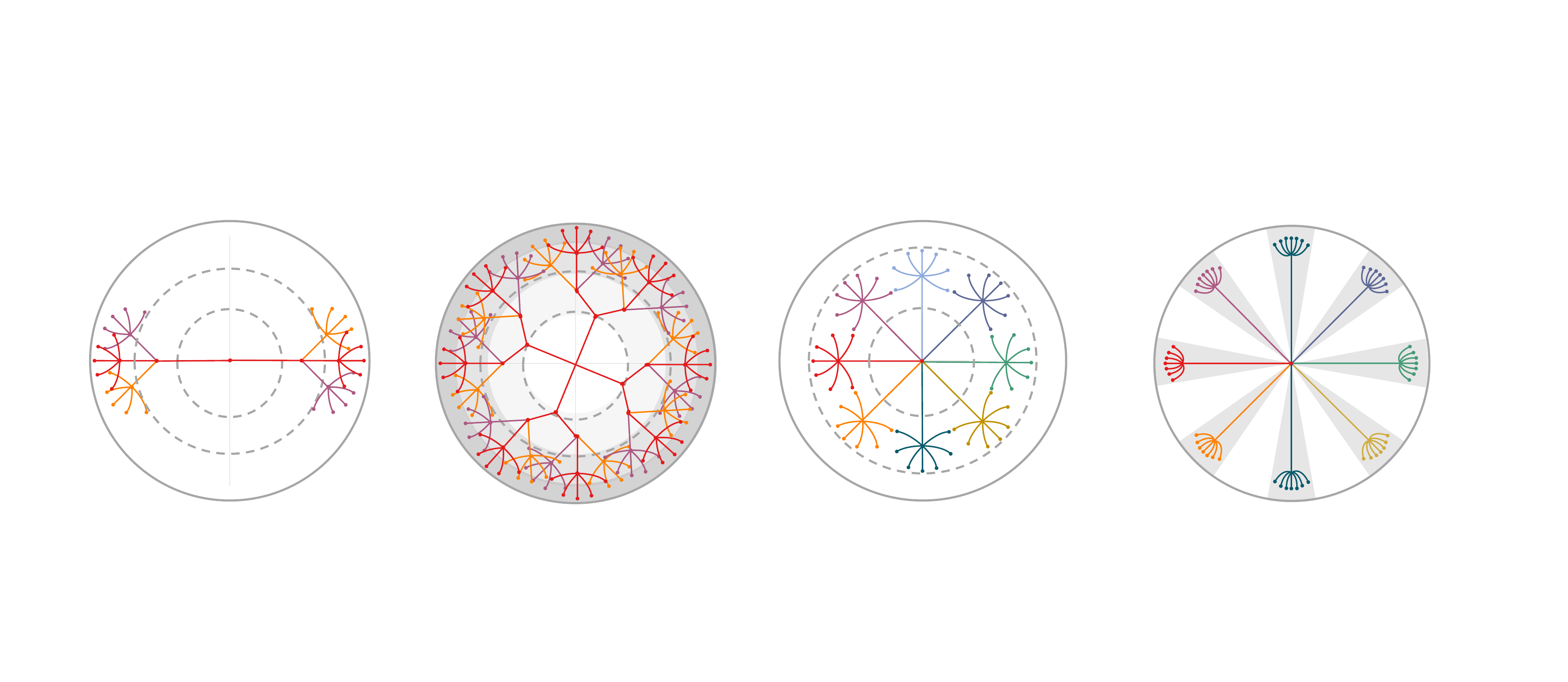}
        \caption{Leaf+height\\ collapse}
        \label{fig:hyper_tree_4}
\end{subfigure}
\hspace{1pt}
\begin{subfigure}{0.48\linewidth}
        \centering
    \includegraphics[width=1\linewidth]{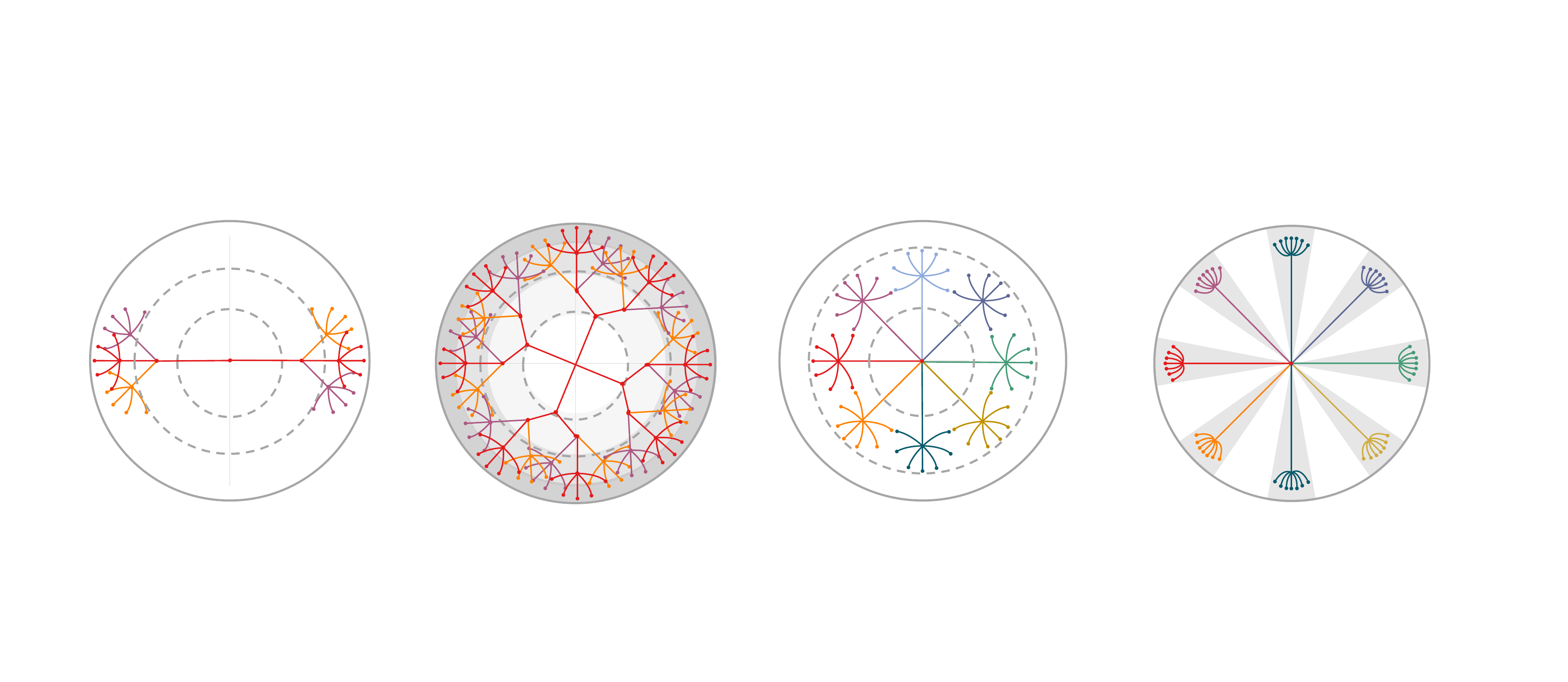}
        \caption{Healthy tree\\ embedding}
        \label{fig:hyper_tree_2}
\end{subfigure}
    \caption{Several trees embedded in  Poincaré disk. Fig. \ref{fig:hyper_tree_1}, \ref{fig:hyper_tree_3} \& \ref{fig:hyper_tree_4} show tree embeddings under the dimensional collapse, whereas Fig. \ref{fig:hyper_tree_2} is a plausible ``healthy'' tree embedding (see text for details). Gray rings indicate the density of the underlying distribution.\label{fig:tree_emb}}
    \vspace{-0.6cm}
\end{wrapfigure}

Preventing the dimensional collapse in representation learning is of utmost importance. Wang and Isola \cite{wang2020understanding} provide a new perspective on this issue by introducing a uniformity metric that limits the degree of collapse in representations. They have demonstrated that in order  to maintain diverse information for downstream tasks,  the 
learned embeddings must be evenly distributed on the hypersphere $\mathcal{S}^{d-1}$. In the Euclidean space, a typical collapse mode occurs when features collapse to a single point or a subspace, as  in \figref{fig:dc} (left).

\yifei{However, the collapse mode (\figref{fig:dc} (right)) for the hyperbolic model differs due to the exponential growth of hyperbolic space. Direct application of uniformity fails to ``fill'' that infinite volume uniformly. The uniformity in the ambient space of the hyperbolic manifold is also meaningless for curved manifold. In other words, the uniformity loss for contrastive learning makes no sense in hyperbolic spaces which motivate us to rethinking {\em what  the dimensional collapse  in the hyperbolic space is and how to prevent it.}

To this end, we identify two types of collapse caused by poor utilization of hyperbolic feature space (Section~\ref{sec:hdc}). To mitigate the Hyperbolic Dimensional Collapse (HDC), we impose isotropy shell densities in the ambient space, arguing this is a meaningful regularization in hyperbolic spaces in place of traditional uniformity. Our theoretical analysis reveals that imposing the radially symmetric distributions (\eg, the normal distribution) in tangent space results in angularly uniform isotropic shells with density proportional to the distance to the origin. We show that optimizing our isotropy loss leads to the same maximum as optimizing the effective rank (measure of DC).

}
\subsection{Hyperbolic Dimensional Collapse}~\label{sec:hdc}
The dimensional collapse in the hyperbolic space results in an imbalanced distribution in the ambient space of the hyperbolic manifold. \figref{fig:tree_emb} shows the collapse mode may be associated with inadequately constructed tree embeddings. A data point near the center of the Poincaré ball is considered the root node, while data points near the boundary of the ball are leaf nodes. One possible collapse mode is referred by us to as ``\textbf{leaf collapse},'' where the embedding of the tree node collapses into several dense regions at a specific level of the tree. \figref{fig:hyper_tree_1} shows an example of this collapse, which is similar to the collapse observed on the hypersphere when the model maps points only in the few parts of the ball. Another collapse mode, called by us as ``\textbf{height collapse}'' (\figref{fig:hyper_tree_3}), occurs when the underlying hierarchy results in a shallow tree, limiting discriminative hierarchical relationships. \figref{fig:hyper_tree_4} includes both ``leaf collapse'' and ``height collapse'' that limit the efficient use of the embedding space, and the expressive power of the hyperbolic space for downstream tasks. Figure \ref{fig:hyper_tree_2} shows a healthy embedding. The gray rings illustrate density of feature vectors in the ambient space of hyperbolic manifold. Notice the outer shell (ring) with isotropic density along circumference which has the most density, indicating we attained a full tree depth due to the highest density close to the boundary of the  Poincaré ball. In contrast, the density (gray regions) in Fig. \ref{fig:hyper_tree_4} show the dimensional collapse as encoder ends up producing partially empty ambient space with a reduced Effective Rank of ambient features. The actual feature collapse is caused by the encoder: that is why we monitor the ambient space. 

\subsection{Hyperbolic Graph CL (HyperGCL)}

\begin{figure}[t]
\vspace{-0.3cm}
    \centering
    \vspace{-0.3cm}
    \includegraphics[trim={0 0 3.0cm 0},clip=true, width=0.95\linewidth]{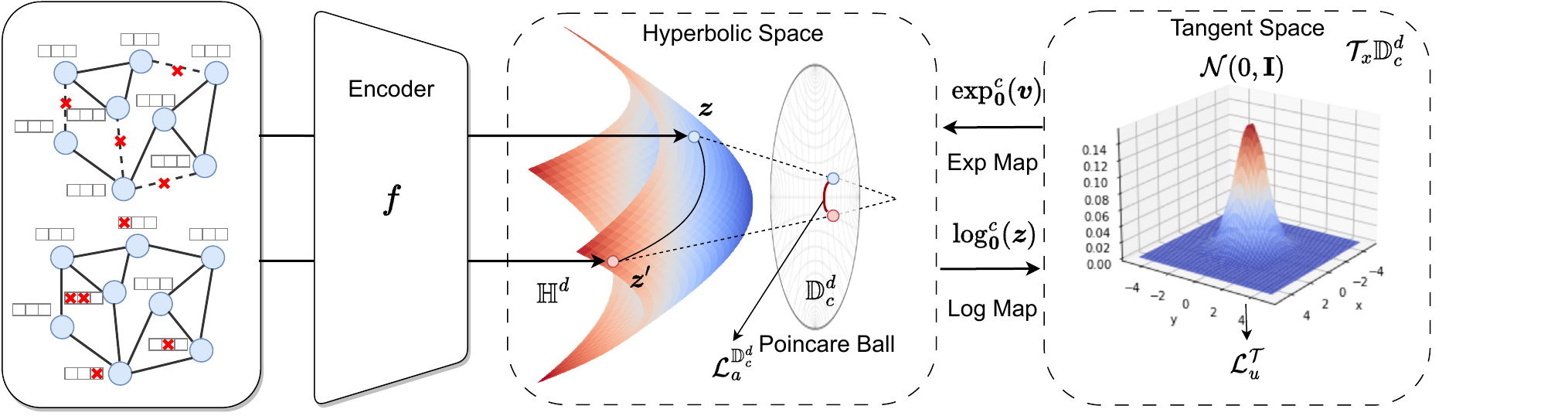}
    \caption{Our Hyperbolic Graph CL framework (HyperGCL).}
    \label{fig:overall}
 \vspace{-0.5cm}
\end{figure}

\noindent\textbf{Overall Framework (\figref{fig:overall}).} We propose 
Hyperbolic Graph Contrastive Learning (HyperGCL) that leverages the theoretical benefits of the hyperbolic space to impose an isotropic shell level-wise in the ambient space and the alignment directly in the hyperbolic space. 

Following the typical graph contrastive learning~\cite{zhang2022costa,zhu2020deep}, we first generate two augmented  graph views $G = (\mathbf{X}, \mathbf{A})$ and $G^\prime = (\mathbf{X}^{\prime}, \mathbf{A}^{\prime})$ by randomly dropping edges and node attributes. Then we adopt Graph Neural Network (GNN), \ie, the GCN encoder $f(\cdot)$, to transform the the augmented graph views into the latent node embedding $\mathbf{Z}$ and $\mathbf{Z^\prime}$. 
\begin{tcolorbox}[width=1\linewidth, colframe=blackish, colback=beaublue, boxsep=0mm, arc=2mm, left=2mm, right=2mm, top=1mm, bottom=1mm]
\textbf{Implicit hyperbolic space.}
Assume the output space of the graph neural network $f_\Theta(\cdot)$ is in the Poincaré ball $\mathbb{D}_c^d$, we project the all the node embedding to the $\mathbb{D}_c^d$ as
\begin{equation}
\boldsymbol{z}:= \begin{cases}\boldsymbol{z} & \text { if }\|\boldsymbol{z}\|_2 \leq (1\!-\!\epsilon)\frac{1}{\sqrt{c}} \\ (1-\epsilon) \frac{\boldsymbol{z}}{\sqrt{c}\|\boldsymbol{z}\|_2} & \text { else }\end{cases}.
\label{eq:poj}
\end{equation}
%

We set a small margin $\epsilon>0$ to prevent infinite volume of the hyperbolic manifold by keeping features away from the disk boundary. Thus, we prevent infinite depth trees and ensure numerical stability.  
 Our latent representation space is implicitly made hyperbolic as neural networks are universal approximators so $\exp(\cdot)$ mapping from the Euclidean to Hyperbolic space is implicitly learned if the Hyperbolic loss is applied to impose hyperbolic organization of feature space. This is a known  alternative \cite{fang2023poincare} to encoders with exponential maps, and hyperbolic encoders. 
 A  study of these two setting is in Appendix~\ref{sec:gnn_hypergnn}.
\end{tcolorbox}

Denote $(\mathbf{Z},\mathbf{Z^\prime})$ as $N$ pairs of node embeddings $\big\{(\boldsymbol{z}_i,\boldsymbol{z}^{\prime}_i)\big\}_{i=1}^N$. The goal of Hyperbolic Graph contrastive learning (HyperGCL) is to find a hyperbolic organization of representation that minimizes the distance between different augmented embeddings $\boldsymbol{z}_i$ and $\boldsymbol{z}^{\prime}_i$ of the same sample node $i$ by maximizing hyperbolic alignment $\mathcal{L}^{\mathbb{D}_c^d}_{A}$, and increase the uniform density along a shell of distinct node embeddings $\boldsymbol{z}_i$ and $\boldsymbol{z}_j$ by promoting the outer shell isotropy $\mathcal{L}^{\mathbb{D}_c^d}_{U}$. We define our new alignment and the outer shell isotropy loss terms 
and propose to optimize the following objective: 
 \begin{equation}
     \mathcal{L}_{HyperGCL}^{\mathbb{D}_c^d}(\mathbf{Z}, \mathbf{Z}^\prime)= \mathcal{L}^{\mathbb{D}_c^d}_{A}(\mathbf{Z}, \mathbf{Z}^\prime) + \lambda\,\mathcal{L}^{\mathbb{D}_c^d}_{U}(\mathbf{Z}, \mathbf{Z}^\prime).
     \label{eq:align_uni_loss}
 \end{equation}

In what follows, our focus is on how to design the alignment and especially  the outer shell isotropy for the contrastive hyperbolic learning to ensure the similar hierarchically organized datapoints enjoy small hyperbolic distance, and prevent the dimensional collapse of the ambient space of the hyperbolic manifold. 



\subsection{Applying Alignment and Outer Shell Isotropy for Hyperbolic Learning}

\vspace{0.1cm}
\noindent\textbf{Optimizing Alignment in $\mathbb{D}_c^d$.} 
Hyperbolic spaces are not vector spaces in a traditional sense so the Euclidean operations are not applicable in such spaces. 
Instead, the formalism of M\"{o}bius gyrovector spaces  helps generalize standard operations to hyperbolic spaces. We simply define\footnote{Kindly note we do not claim the use of the hyperbolic distance as a contribution.} the alignment of $\boldsymbol{z}_i$ and $\boldsymbol{z}^{\prime}_i$ as: 


\vspace{-0.3cm}
\begin{equation}
\mathcal{L}^{\mathbb{D}_c^d}_{A}(\mathbf{Z}, \mathbf{Z}^\prime) = 
\frac{1}{N}\sum_{i=1}^{N} D_c\left( \boldsymbol{z}_i,  \boldsymbol{z}^{\prime}_i\right)=\frac{2}{N\sqrt{c}}\sum_{i=1}^{N} \tanh ^{-1}\left(\sqrt{c}\left\|- \boldsymbol{z}_i \oplus  \boldsymbol{z}_i^{\prime}\right\|\right).
\end{equation}

\noindent\textbf{Optimizing the Outer Shell Isotropy.} Ways to reduce the dimensional collapse in the hyperbolic space are not obvious. One may naively replace the Euclidean distance in~\eqref{eq:euc_align_uni} by the hyperbolic dist.  $D_c( \boldsymbol{z}_i,  \boldsymbol{z}^{\prime}_i)$:

\vspace{-0.5cm}
\begin{equation}
    {\text{$\mathcal{L}$}}^{\mathbb{D}_c^d}_{U} = {\log \!\!\!\!\!\underset{\substack{\text { i.i.d. } \\
\boldsymbol{z}, \boldsymbol{z}^{-} \sim p_{Z}}}{\mathbb{E}}\!\!\!\!\left[e^{-t D_c\left( \boldsymbol{z},  \boldsymbol{z}^{-}\right)}\right].}
\label{eq:hyper_uni}
\end{equation}

\begin{figure*}[t]
\vspace{-0.3cm}
    \centering
         \begin{subfigure}[c]{0.31\textwidth}
    \centering
         \includegraphics[width=\textwidth]{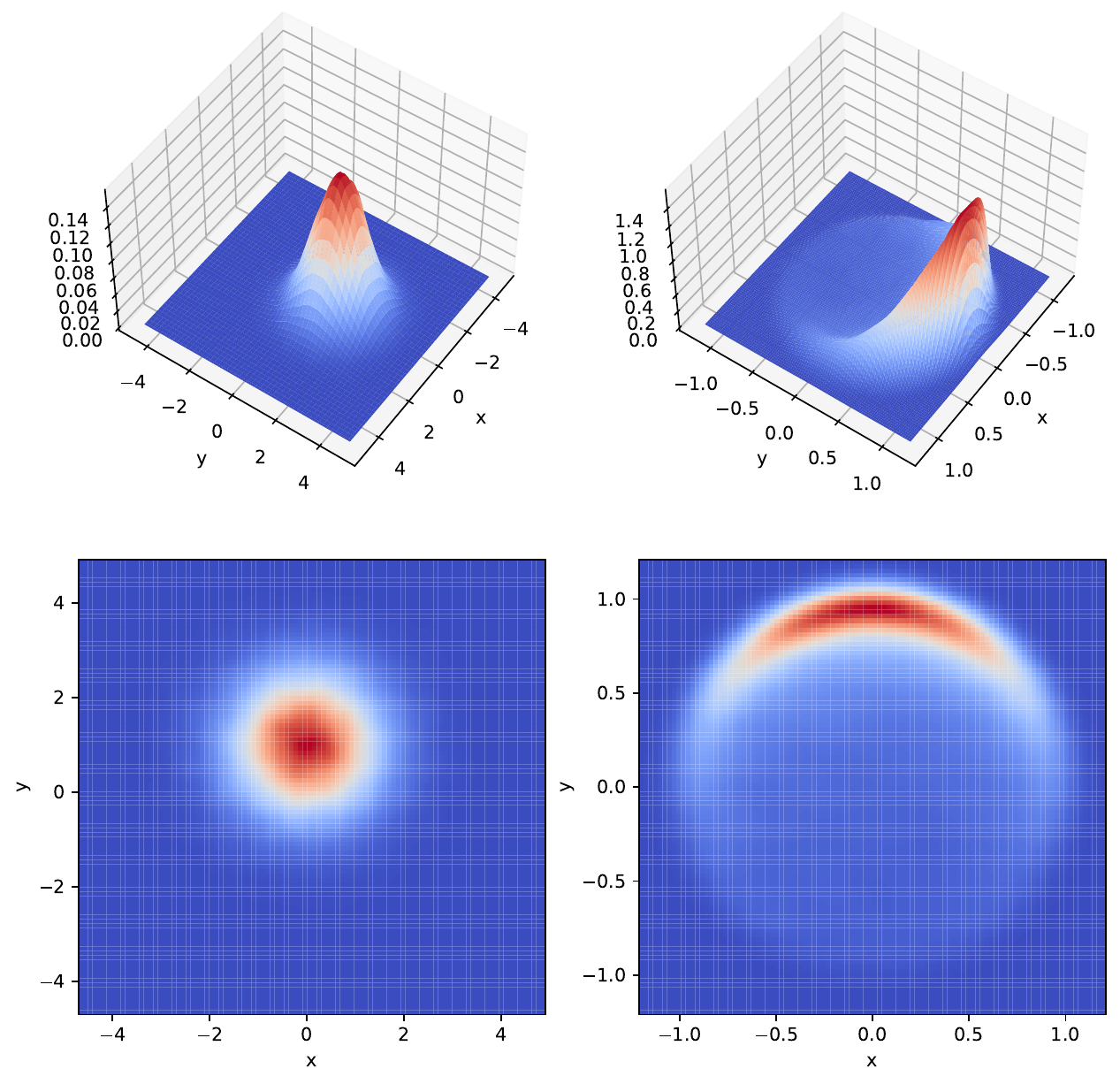}
         \vspace{-0.6cm}
         \caption{$\mathcal{N}([0, 1], \operatorname{diag}([1,1]))$}
         \label{fig:gaussain_dis_a}
    \end{subfigure}
    \hfill
    \begin{subfigure}[c]{0.31\textwidth}
    \centering
         \includegraphics[width=\textwidth]{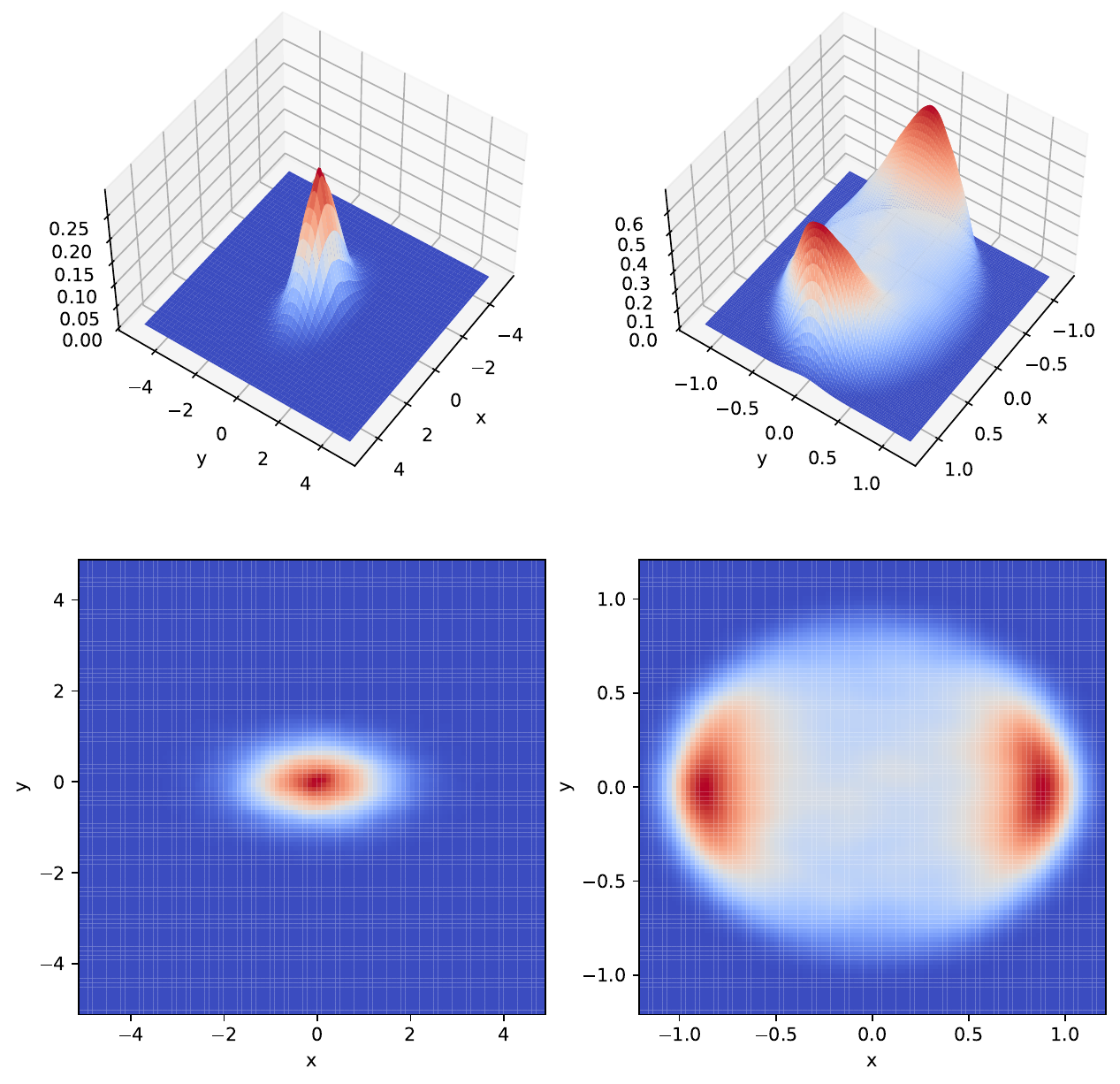}
         \vspace{-0.6cm}
         \caption{$\mathcal{N}([0, 0], \operatorname{diag}([1,0.3]))$}
         \label{fig:gaussain_dis_b}
    \end{subfigure}
    \hfill
     \begin{subfigure}[c]{0.31\textwidth}
    \centering
         \includegraphics[width=\textwidth]{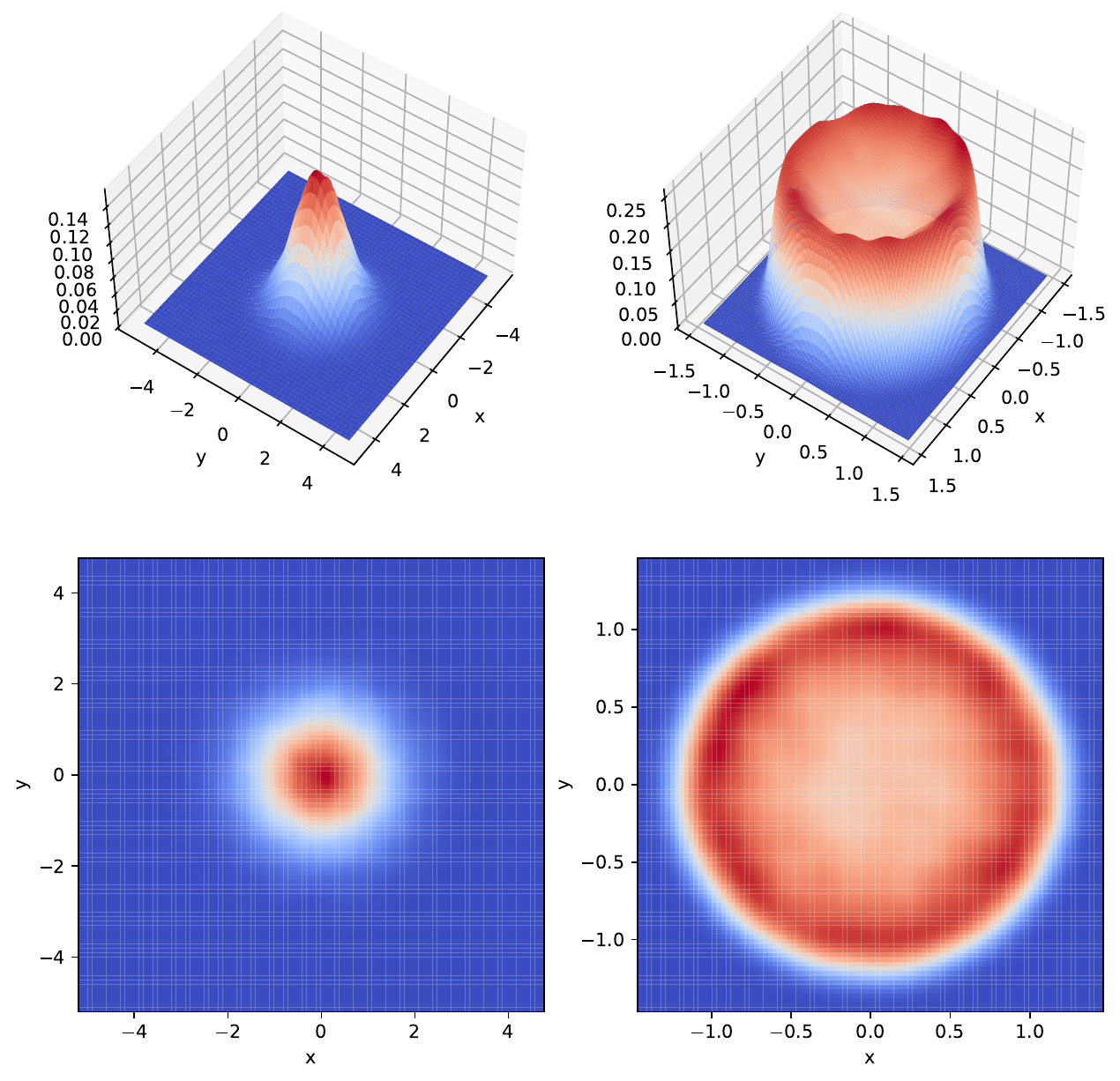}
         \vspace{-0.6cm}
         \caption{$\mathcal{N}([0, 0], \operatorname{diag}([1,1]))$ }
         \label{fig:gaussain_dis_d}
    \end{subfigure}
    \vspace{-0.3cm}
    \caption{Visualization: leaf-level uniformity \wrt non-isotropic/non-zero mean Normal distribution.\label{fig:gaussain_dis}}
\end{figure*}

\begin{table*}[ht]
    \centering
    \resizebox{1\linewidth}{!}{
    \begin{tabular}{c c c c c}
    \toprule[2pt] 
       PDF &  $p(\mathbf{y})$ &  $q(\mathbf{y})$ &$D(\mathbf{\Sigma}, \boldsymbol{0})=D_{KL}(p,q)$ & $D(\mathbf{\Sigma}, \boldsymbol{0})+D(\mathbf{\Sigma}^{\prime}, \boldsymbol{0}^{\prime})$ \\
    \midrule
       Normal
       &$\frac{1}{(2 \pi)^{d / 2} \operatorname{det}(\boldsymbol{\Sigma})^{1 / 2}} \exp \left(-\frac{1}{2}(\boldsymbol{y}\!-\!\boldsymbol{\mu})^T \boldsymbol{\Sigma}^{-1}(\boldsymbol{y}\!-\!\boldsymbol{\mu})\right)\;\;$\refstepcounter{equation}(\theequation)
       \label{eq:normal}
       &$\frac{1}{(2 \pi)^{d / 2} } \exp \left(-\frac{1}{2}\lVert\boldsymbol{y}\rVert_2^2\right)\qquad\qquad\!\!\!$\refstepcounter{equation}(\theequation)\label{eq:normal2}
       & $-\frac{1}{2}\log \operatorname{det}(\boldsymbol{\Sigma})\!+\!\frac{1}{2}\left[\operatorname{tr}(\boldsymbol{\Sigma})\!-\!d\right]\;$\refstepcounter{equation}(\theequation)\label{eq:normal3}
       & $-\frac{1}{2}\log \operatorname{det}(\boldsymbol{\Sigma}\boldsymbol{\Sigma}')\!+\!\frac{1}{2}\left[\operatorname{tr}(\boldsymbol{\Sigma}\!+\!\boldsymbol{\Sigma}')\!-\!2d\right]\quad\!$\refstepcounter{equation}(\theequation)\label{eq:normal4}\\
       \midrule
       Laplace
       %
       %
       %
       %
       %
       %
       & $\frac{2}{(2 \pi)^{d/2}\operatorname{det}(\boldsymbol{\Sigma})^{1/2}}\left(\frac{\boldsymbol{y}^T \boldsymbol{\Sigma}^{-1} \mathbf{y}}{2}\right)^{v / 2} \!\!K_v\left(\sqrt{2 \boldsymbol{y}^T \boldsymbol{\Sigma}^{-1} \boldsymbol{y}}\right)\;$\refstepcounter{equation}(\theequation)\label{eq:lap1}
       & $\frac{2}{(2 \pi)^{d/2}}\left(\frac{\lVert\mathbf{y}\rVert_2^2}{2}\right)^{v / 2} \!\!K_v\left(\sqrt{2}\lVert\mathbf{y}\rVert_2\right)\;$\refstepcounter{equation}(\theequation)\label{eq:lap2}
       & $-\frac{1}{2}\log \operatorname{det}(\Sigma)\!+\!\operatorname{tr}(\Sigma^{1/2})\!-\!d\;\;$\refstepcounter{equation}(\theequation)\label{eq:lap3}
       & $-\frac{1}{2}\log \operatorname{det}(\Sigma\Sigma')\!+\!\operatorname{tr}(\Sigma^{1/2}\!+\!\Sigma'^{1/2})\!-\!2d\;$\refstepcounter{equation}(\theequation)\label{eq:lap4}  \\
       \bottomrule[2pt] 
    \end{tabular}
    }
        \caption{KL divergence $D_{KL}(p,q)$ between two multivariate  ({\em top})  Normal and  ({\em bottom}) symmetric Laplace  distributions with densities $p(\mathbf{y})$ and $q(\mathbf{y})$ and parameters $(\boldsymbol{0},\boldsymbol{\Sigma})$ and $(\boldsymbol{0},\boldsymbol{I})$, Moreover, $v\!=\!(2\!-\!d)/2$ and 
        $K_v(\cdot)$ is the second kind modified Bessel fun.}
    \label{tab:divergence}
\end{table*}

\noindent
However, unlike the Euclidean space where points are constrained on the hypersphere (\eqref{eq:euc_align_uni}), the hyperbolic space is non-compact and has infinite volume. Thus, maximizing the pair-wise hyperbolic distance of pairs of points in the hyperbolic space via~\eqref{eq:hyper_uni} will move embedding towards the boundary of the Poincaré disk. It will also result in poor ``filling'' of the ambient space due to the infinite volume at the boundaries,  resulting in the  height and leaf collapse as in \figref{fig:hyper_tree_4} and a suboptimal performance (\tabref{tab:ab_study}). Moreover, computing pair-wise distance via \eqref{eq:hyper_uni} is computationally intense.

Due to the exponential growth of volume of hyperbolic manifold, 
the desired  leaf- and height-level uniformity are required. Our investigation reveals that employing radially symmetric distributions in the tangent plane at $\mathbf{0}$, $T_{\mathbf{0}} \mathbb{D}^d_c$, yields desirable properties in the ambient space of the hyperbolic manifold. Specifically, we seek distributions that, when mapped to the hyperbolic space, exhibit the following key characteristics: 

\vspace{-0.2cm}
\begin{enumerate}[leftmargin=0.6cm]
    \item {\bf Preservation of radial symmetry.} The radial component of the mapped distribution depends only on the radial component of the original distribution in the tangent space.
    \item {\bf Uniform angular distribution.} The angular component of the mapped distribution remains uniform \wrt the angle. 
    \item {\bf Outer shell isotropy.} As the radius increases toward the boundary of the Poincaré ball, the distribution density increases but remains uniform \wrt the angle, leading to what we call ``outer shell isotropy''.
\end{enumerate}

\vspace{-0.2cm}


These properties ensure that our method can effectively capture the hierarchical structure inherent in many datasets while preventing dimensional collapse. We mainly investigate this phenomenon by aligning a mutlivariate Normal distribution of features in the tangent space  (\figref{fig:gaussain_dis_d}) with a multivariate isotropic Normal distribution thanks to a closed-form KL divergence. 
We  try  multivariate Normal and  Laplace based KL divergences from Table \ref{tab:divergence}.

Thus, we propose to align  in the tangent plane the feature distribution of learned representations with the isotropic Normal distribution. We gather a set of data vectors from the learned representations in one view, denoted as $\big\{ \boldsymbol{z_i} \big\}_{i=1}^N \in \mathbb{D}_c^d$, and calculate the mean and covariance matrix  as follows:
%

\noindent
\begin{equation}
\setlength{\abovedisplayskip}{4pt}
    \!\!\!\!\!\!\!\!\boldsymbol{\mu}\!=\!\frac{1}{N} \sum_{i=1}^N \log _{\mathbf{0}}^c(\boldsymbol{z}_i), \; \boldsymbol{\Sigma}\!=\!\frac{1}{N} \sum_{i=1}^N\left[\log_{\mathbf{0}}^c(\boldsymbol{z}_i)\!-\!\boldsymbol{\mu}\right]^T\!\left[\log_{\mathbf{0}}^c(\boldsymbol{z}_i)\!-\!\boldsymbol{\mu}\right],
\end{equation}

\noindent
where $\boldsymbol{\mu} \in \mathbb{R}^d, \boldsymbol{\Sigma} \in \mathbb{S}^{d}_{++}$ (the space of symmetric positive definite matrices) and $d$ is the dimension  of vectors. For the second view of graph, we calculate by analogy $(\boldsymbol{\mu}', \boldsymbol{\Sigma}')$.

Let $\boldsymbol{y} = \log_{\boldsymbol{0}}^c({\boldsymbol{z}}) \in \mathbb{R}^{d}$ denote embedding in the he tangent plane at $\boldsymbol{0}$. We apply a Gaussian hypothesis to learned representations where $\boldsymbol{y}\!\sim\!\mathcal{N}(\boldsymbol{\mu}, \boldsymbol{\Sigma})$ and $p(\boldsymbol{y})$ is defined in \eqref{eq:normal}.

Then we let $\boldsymbol{y}'\!\!\sim\!\mathcal{N}(\boldsymbol{0}, \boldsymbol{I})$ and $q(\boldsymbol{y}')$ from \eqref{eq:normal2}. We calculate the KL divergence between multivariate Normal distributions $p$ \& $q$, yielding \eqref{eq:normal3} and the squared penalty of the mean (derived in Appendix \ref{app:kl-proof1}). Thus, $D(\mathbf{\Sigma},\boldsymbol{\mu})\!=\!D_{KL}\left(p, q \right)+\lVert\boldsymbol{\mu}\rVert_2^2$. For 
 cov. $\mathbf{\Sigma}$ \& $\mathbf{\Sigma}'$ (with  $\boldsymbol{\mu}$ \& $\boldsymbol{\mu}'$) of two graph views, 
we set $\mathcal{L}^{\mathbb{D}_c^d}_{U}$ in \eqref{eq:align_uni_loss} to the outer shell isotropy term $\mathcal{L}^{\mathbb{D}_c^d}_{U}\!=\!D(\mathbf{\Sigma}, \boldsymbol{\mu})+D(\mathbf{\Sigma}^{\prime}, \boldsymbol{\mu}^{\prime})$ as in \eqref{eq:normal4}.

For the KL divergence between symmetric multivariate Laplace distributions (derived in Appendix  \ref{app:lap}) with $p(\boldsymbol{y})$ and $q(\boldsymbol{y}')$ as in \eqref{eq:lap1} and (\ref{eq:lap2}), where $\boldsymbol{y}\!\sim\!\mathcal{L}_{ap}(\boldsymbol{\mu}, \boldsymbol{\Sigma})$ and $\boldsymbol{y}'\!\!\sim\!\mathcal{L}_{ap}(\boldsymbol{0}, \boldsymbol{I})$, we have $D(\mathbf{\Sigma},\boldsymbol{\mu})\!=\!D_{KL}\left(p, q \right)\!+\!\lVert\boldsymbol{\mu}\rVert_2^2$ based on \eqref{eq:lap3}, and $\mathcal{L}^{\mathbb{D}_c^d}_{U}=D(\mathbf{\Sigma}, \boldsymbol{\mu})+D(\mathbf{\Sigma}^{\prime}, \boldsymbol{\mu}^{\prime})$ based on \eqref{eq:lap4}.

\section{Theoretical Analysis}
Below we show that 
symmetric radial distributions 
can limit the dimensional collapse in hyperbolic space, \eg, imposing the zero-centered isotropic Normal distribution in the tangent space leads to the isotropic shell density in the ambient space which increases ERank (a known measure of DC) of the learned feature matrix.

\vspace{-0.1cm}
\subsection{Mapping of the Symmetric Distribution from the Tangent to the Ambient Space.}

\begin{wrapfigure}{r}{0.45\linewidth}
\vspace{-0.9cm}
    \centering
\begin{subfigure}{0.32\linewidth}
        \centering
        \includegraphics[width=1.0\linewidth]{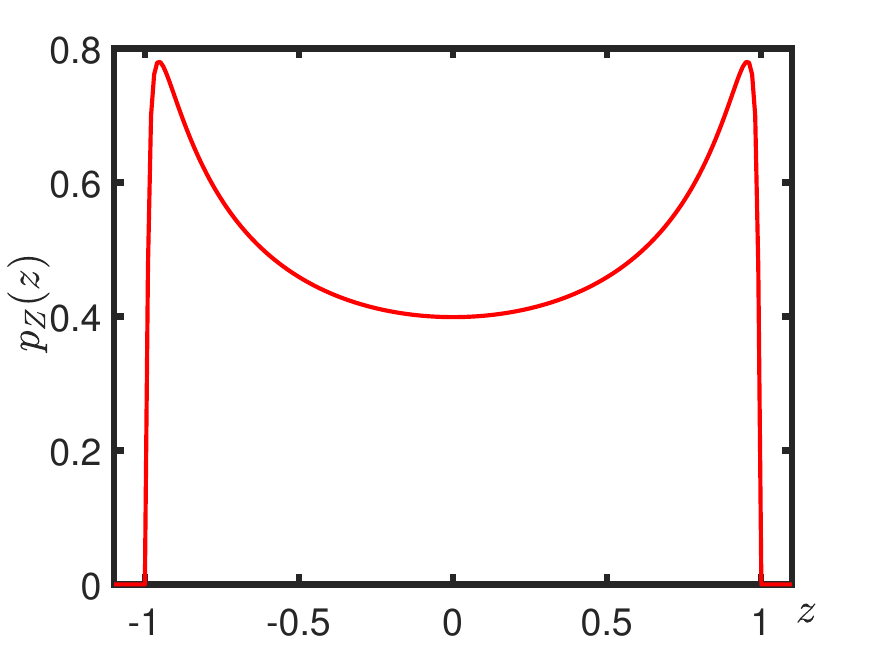} 
        \caption{1D ($\sigma\!=\!1, c\!=\!1$)}
        \label{fig:sim1d}
\end{subfigure}
    \begin{subfigure}{0.32\linewidth}
        \centering
        \includegraphics[width=1.0\linewidth]{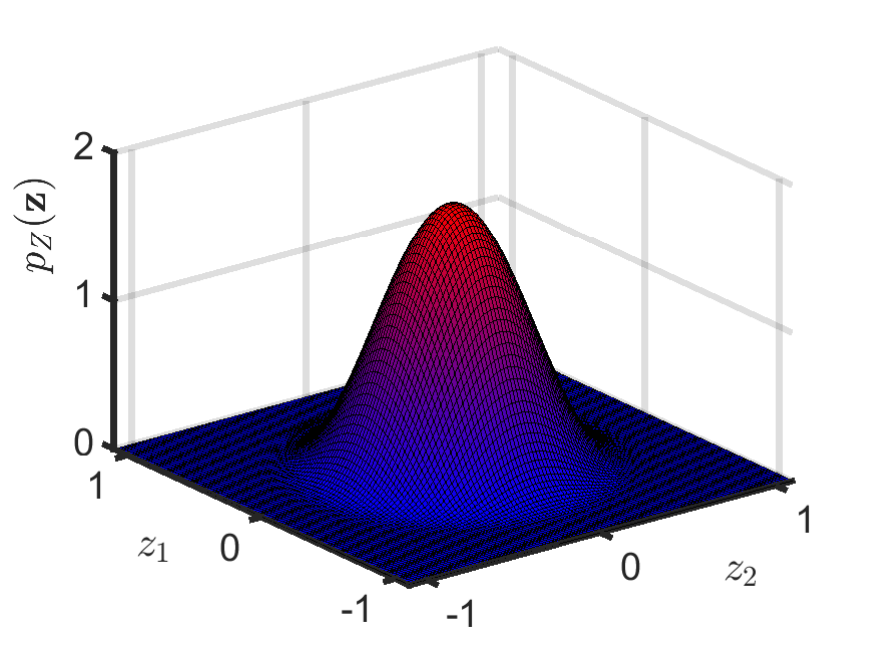} 
        \caption{2D ($\sigma\!=\!0.3, c\!=\!1$)}
        \label{fig:simadd1}
\end{subfigure}
\begin{subfigure}{0.32\linewidth}
        \centering
        \includegraphics[width=1.0\linewidth]{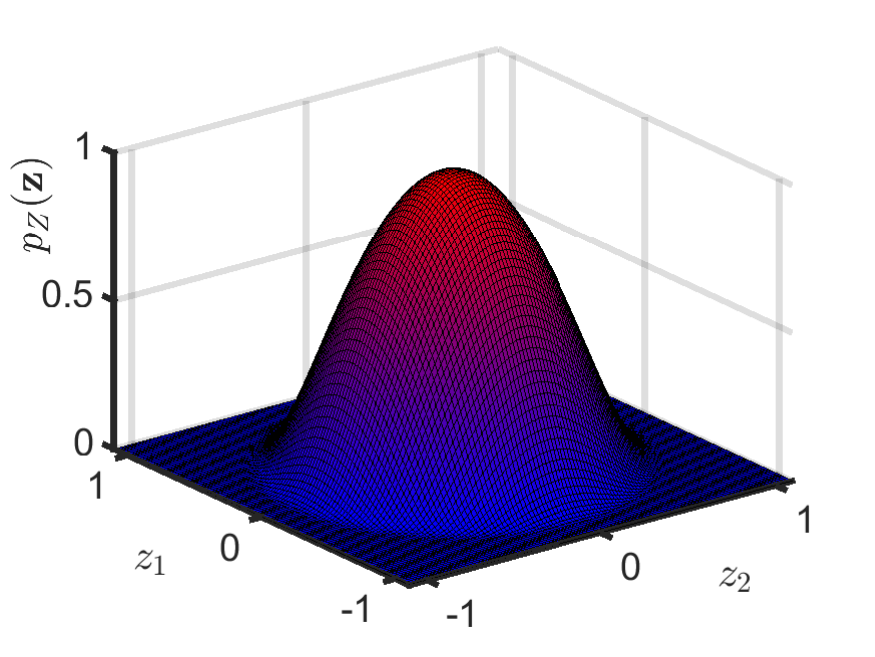} 
        \caption{2D ($\sigma\!=\!0.4, c\!=\!1$)}
        \label{fig:simadd1a}
\end{subfigure}
%
\begin{subfigure}{0.32\linewidth}
        \centering
    \includegraphics[width=1.0\linewidth]{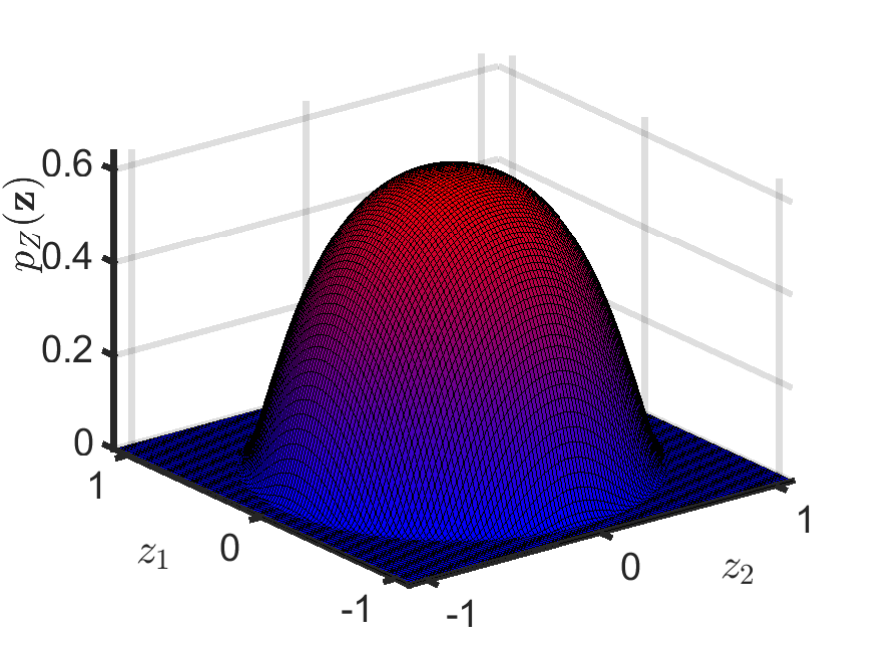}
        \caption{2D ($\sigma\!=\!0.5, c\!=\!1$)}
        \label{fig:simadd2}
\end{subfigure}
\begin{subfigure}{0.32\linewidth}
        \centering
    \includegraphics[width=1.0\linewidth]{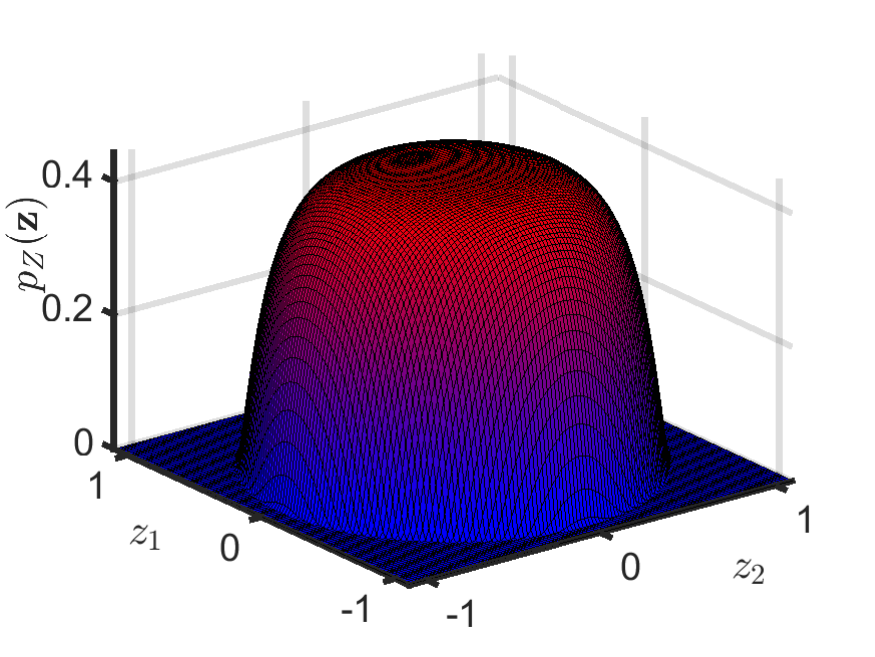}
        \caption{2D ($\sigma\!=\!0.6, c\!=\!1$)}
        \label{fig:simadd2b}
\end{subfigure}
\begin{subfigure}{0.32\linewidth}
        \centering
    \includegraphics[width=1.0\linewidth]{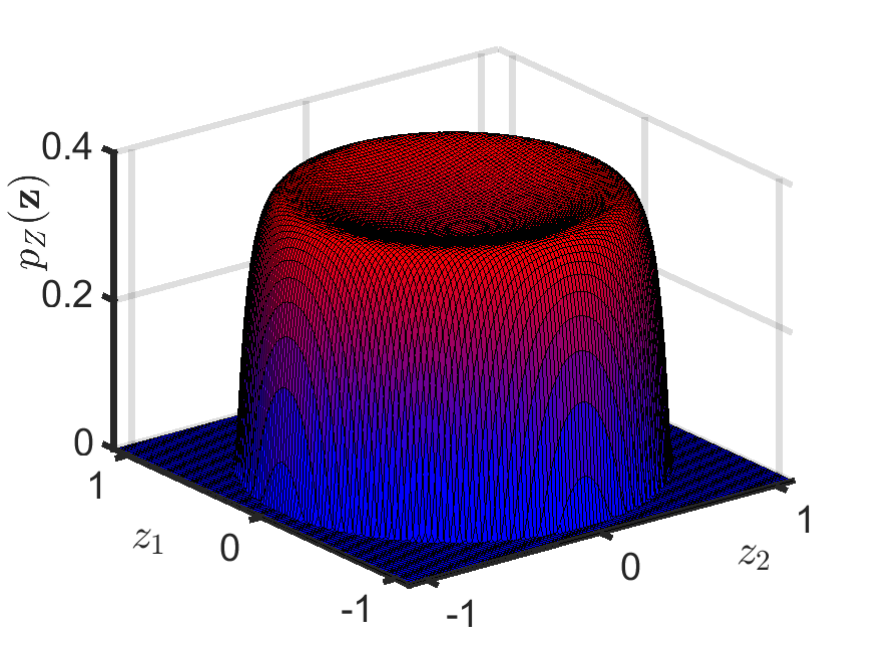}
        \caption{2D ($\sigma\!=\!.7, c\!=\!1$)}
        \label{fig:sim2da}
\end{subfigure}
\begin{subfigure}{0.32\linewidth}
        \centering
    \includegraphics[width=1.0\linewidth]{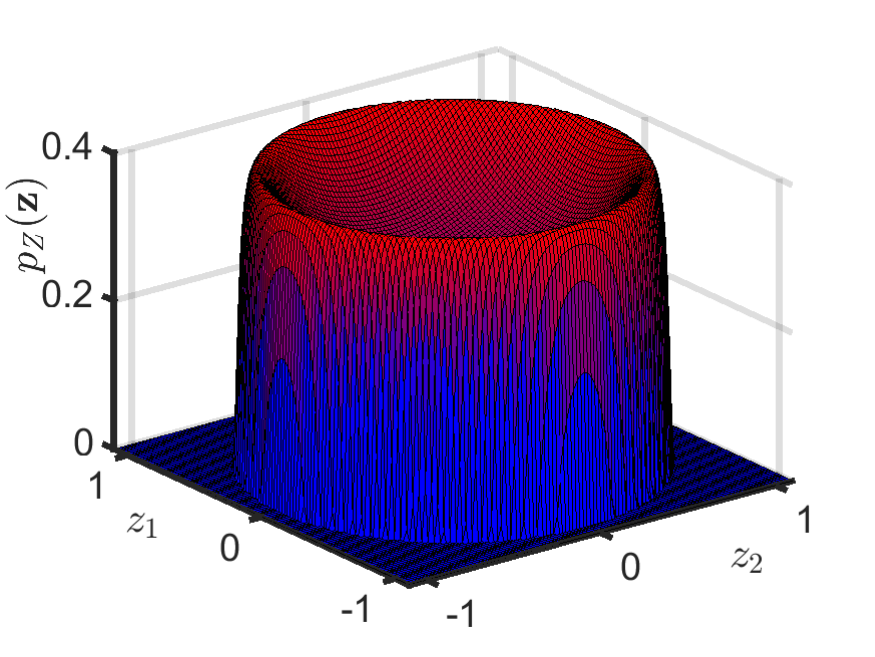}
        \caption{2D ($\sigma\!=\!.8, c\!=\!1$)}
        \label{fig:sim2db} 
\end{subfigure}
\begin{subfigure}{0.32\linewidth}
        \centering
    \includegraphics[width=1.0\linewidth]{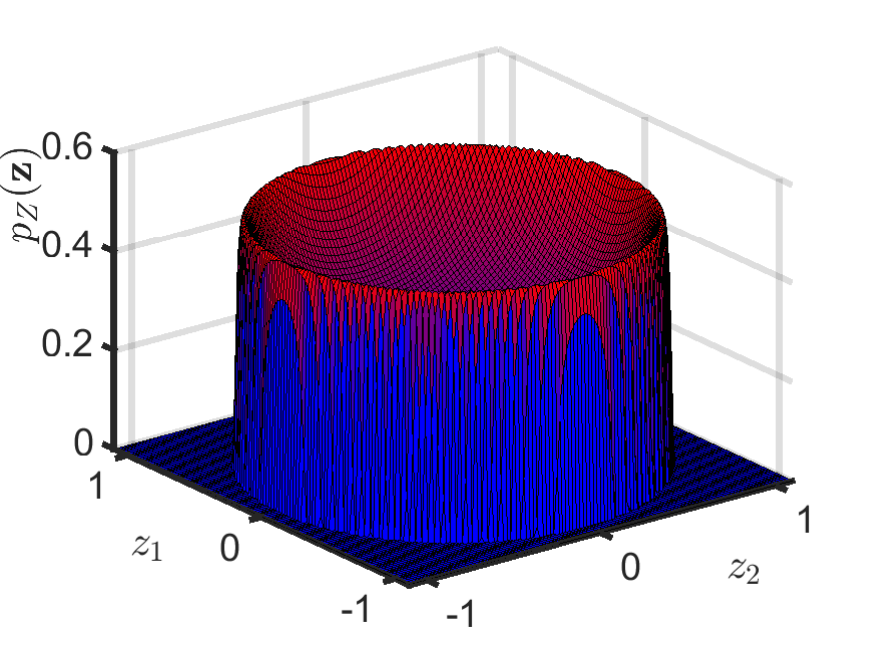}
        \caption{2D ($\sigma\!=\!0.9, c\!=\!1$)}
        \label{fig:simadd3b}  
\end{subfigure}
\begin{subfigure}{0.32\linewidth}
        \centering
    \includegraphics[width=1.0\linewidth]{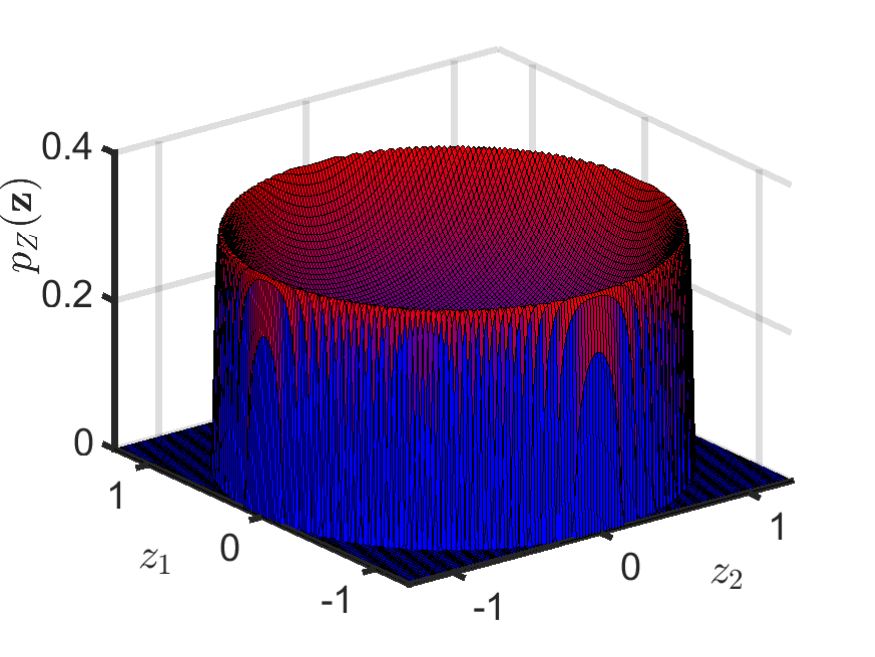}
        \caption{2D ($\sigma\!=\!1.2, c\!=\!0.6$)}
        \label{fig:simadd4}
\end{subfigure}
     \caption{The theoretical distribution $p_Z(\mathbf{z})$ of $\mathbf{z}$ (\eqref{eq:theor_dis}) in the ambient space of the hyperbolic manifold. As a sanity check, we evaluated the integral of each distribution (all equal one as expected).
    For small standard deviation $\sigma\!=\!0.3$ as in Fig. \ref{fig:simadd1} (and  in Fig. \ref{fig:simadd1a} and \ref{fig:simadd2}), the distribution resembles the Normal distribution and is likely to cause the height collapse. For medium standard deviation $\sigma\!=\!0.6$ as in Fig. \ref{fig:simadd2b}, one approximately recovers uniformity within the  Poincaré disk (undesired). Fig. \ref{fig:sim2db}, \ref{fig:simadd3b} and \ref{fig:simadd4} show that larger values of standard deviation lead to the maximum uniform density concentration on the outer shell of each distribution. Fig. \ref{fig:sim1d}: density (1D) for clear illustration of its peaks.
    }
        \label{sup:add}
    \vspace{-2.2cm}
\end{wrapfigure}

\begin{theorem}[Generalized Radial Distribution Mapping in Constant Curvature Spaces]
\label{th:rdm}
Let $(\mathcal{M}, g)$ be a Riemannian manifold of constant curvature $K$, and let $f: T_0\mathcal{M} \to \mathcal{M}$ be the exponential map at the origin. Consider a radial probability distribution $p_V(\mathbf{v})$ on $T_0\mathcal{M}$ that can be expressed in the general form:
$p_V(\mathbf{v}) = \phi(|\mathbf{v}|)$
where $\phi$ is any non-negative, integrable function that depends only on the norm of $\mathbf{v}$ and satisfies the normalization condition $\int_{T_0\mathcal{M}} \phi(|\mathbf{v}|) d\mathbf{v} = 1$.
Let $p_Z(\mathbf{z})$ be the distribution of $\mathbf{z} = f(\mathbf{v})$ in $\mathcal{M}$. Then the radial component of $p_Z(\mathbf{z})$ depends only on the radial component of $p_V(\mathbf{v})$.
The angular component of $p_Z(\mathbf{z})$ is uniform on the unit sphere in $T_z\mathcal{M}$.
$p_Z(\mathbf{z})$ exhibits ``outer shell isotropy'' in $\mathcal{M}$, defined as:
$\lim_{|\mathbf{z}| \to R} \frac{p_Z(\mathbf{z}_1)}{p_Z(\mathbf{z}_2)} = 1$
for any $\mathbf{z}_1, \mathbf{z}_2$ with $|\mathbf{z}_1| = |\mathbf{z}_2|$, where $R\!\geq\!0$ is the maximum radius in $\mathcal{M}$ (infinite for $K \leq 0$, finite for $K > 0$).
\end{theorem}

\vspace{-0.3cm}
\begin{proof}
See Appendix \ref{app:proof1}. 
\end{proof}
Theorem \ref{th:rdm} is a general statement about the behavior of radial distributions when mapped to constant curvature spaces. Intuitively, 1) as the uniform shell density coincides with the uniform tree nodes density at a given depth, we need that property. 2) 
We also desire density to increase exponentially \wrt the radius, proportionally to the total count of nodes $1\!+\!\rho^1+\ldots+\rho^{l-1}$ at level $l$ in a balanced tree with $\rho$ children per parent.
The multivariate Normal distribution below and the symmetric multivariate Laplace meet these criteria.

\begin{theorem}
\label{th:thth}
The Normal distribution can be mapped into the ambient space of the Hyperbolic manifold by the change of variable. Let the transformation variable be $f(\boldsymbol{v})=\exp _{\mathbf{0}}^c(\boldsymbol{v})=\tanh (\sqrt{c}\|\boldsymbol{v}\|) \frac{\boldsymbol{v}}{\sqrt{c}\|\boldsymbol{v}\|_2}$. Then by mapping the Normal distribution from  the tangent plane at $\mathbf{0}$ via $f(\boldsymbol{v})$ to the ambient space, we obtain    distribution:
\begin{equation}
p_Z(\mathbf{z})=0.5\,\delta\big(p_\mathcal{N}(\log_{\mathbf{0}}^c(\boldsymbol{z}); \boldsymbol{\mu}, \mathbf{\Sigma})\cdot\lambda^c_{\mathbf{z}} \,g^{d-1}(\mathbf{z})\big),
\label{eq:theor_dis}
\end{equation}
where $p_\mathcal{N}(\cdot)$ is the Probability Density Function (PDF)  of the Normal distribution, $\lambda^c_{\mathbf{z}}=\frac{2}{1-c\|\mathbf{z}\|_2^2}$ is the so-called conformal factor, $g(\mathbf{z})=\frac{1}{\sqrt{c}\|\mathbf{z}\|_2}\tanh^{-1}(\sqrt{c}\|\mathbf{z}\|_2),$   $\delta(v)\!=\!v$ if Imaginary$(v)\!=\!0$ else $\delta(v)\!=\!0$.
\begin{proof}
See Appendix \ref{app:proof1}. 
\end{proof}
\vspace{-10px}
\end{theorem}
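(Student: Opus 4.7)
The plan is to apply the standard change-of-variables formula for probability densities to the radial diffeomorphism $f(\boldsymbol{v})=\exp_{\mathbf{0}}^c(\boldsymbol{v})$, whose inverse on the Poincar\'e ball is $f^{-1}(\mathbf{z})=\log_{\mathbf{0}}^c(\mathbf{z})$. Writing $p_Z(\mathbf{z}) = p_\mathcal{N}(f^{-1}(\mathbf{z});\boldsymbol{\mu},\boldsymbol{\Sigma})\,|\det J_{f^{-1}}(\mathbf{z})|$ reduces the problem to computing the Jacobian determinant of the logarithmic map at $\mathbf{0}$. Because $f$ is purely radial (it preserves directions and rescales only the magnitude), I will decompose the Jacobian in spherical coordinates into an angular piece and a radial piece.

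First I would write $\mathbf{v} = r\hat{\boldsymbol{v}}$ with $r=\|\boldsymbol{v}\|_2$, so that $\mathbf{z} = \tfrac{1}{\sqrt{c}}\tanh(\sqrt{c}\,r)\,\hat{\boldsymbol{v}}$, giving $\|\mathbf{z}\|_2 = \tanh(\sqrt{c}\,r)/\sqrt{c}$ and $\hat{\mathbf{z}}=\hat{\boldsymbol{v}}$. Inverting yields $r=\tfrac{1}{\sqrt{c}}\tanh^{-1}(\sqrt{c}\,\|\mathbf{z}\|_2)$, and hence $r/\|\mathbf{z}\|_2 = g(\mathbf{z})$, with $g$ exactly as in the statement. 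Because the map preserves the angular coordinates, the tangential block of $J_{f^{-1}}$ scales each of the $d-1$ angular directions by a factor $r/\|\mathbf{z}\|_2$, contributing a factor $g^{d-1}(\mathbf{z})$ to $|\det J_{f^{-1}}|$.

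Next I would compute the radial derivative. Differentiating $\|\mathbf{z}\|_2 = \tanh(\sqrt{c}\,r)/\sqrt{c}$ gives $d\|\mathbf{z}\|_2/dr = \operatorname{sech}^2(\sqrt{c}\,r) = 1 - c\|\mathbf{z}\|_2^2$, so $dr/d\|\mathbf{z}\|_2 = 1/(1-c\|\mathbf{z}\|_2^2) = \tfrac{1}{2}\lambda^c_{\mathbf{z}}$. Multiplying the radial and tangential contributions I obtain $|\det J_{f^{-1}}(\mathbf{z})| = \tfrac{1}{2}\lambda^c_{\mathbf{z}}\, g^{d-1}(\mathbf{z})$, and substituting into the change-of-variables formula produces exactly the expression in \eqref{eq:theor_dis}, with the prefactor $0.5$ absorbing the $1/2$ from the conformal factor. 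Finally I would note that $f$ is a diffeomorphism only onto the open ball $\|\mathbf{z}\|_2 < 1/\sqrt{c}$; outside this ball $\tanh^{-1}(\sqrt{c}\|\mathbf{z}\|_2)$ becomes non-real, which is precisely what the operator $\delta(\cdot)$ in the statement is designed to enforce, zeroing the density on the complement of the open ball.

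The main obstacle I anticipate is the clean justification of the spherical-coordinate split, since the naive block-Jacobian argument needs the orthonormal frame of the tangent sphere at $\mathbf{v}$ to be transported to the corresponding frame at $\mathbf{z}$; here this works because $f$ is radial and all angular directions are stretched by the same factor $\|\mathbf{z}\|_2/r$, making the tangential block a scalar multiple of the identity on the orthogonal complement of $\hat{\boldsymbol{v}}$. The rest of the proof is a direct verification; the only subtlety beyond that is the boundary/support bookkeeping handled by $\delta$.
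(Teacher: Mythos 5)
Your proposal is correct and follows the same route as the paper: change of variables via $f^{-1}=\log_{\mathbf 0}^c$, reduce to $|\det J_{f^{-1}}(\mathbf z)|$, and obtain $\tfrac12\lambda^c_{\mathbf z}\,g^{d-1}(\mathbf z)$. The only difference is that the paper asserts the Jacobian determinant ``after crunching some maths'' while you actually carry out the computation via the radial/angular block decomposition (one factor $dr/d\rho = \tfrac12\lambda^c_{\mathbf z}$ and $d-1$ angular factors $r/\rho = g(\mathbf z)$), which cleanly supplies the detail the paper omits; your reading of the $\delta(\cdot)$ term as encoding the support restriction to the open Poincar\'e ball is also the intended one.
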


\begin{table*}[t]
\vspace{-0.5cm}
\setlength\tabcolsep{8pt}
\fontsize{8}{7}\selectfont
\centering
\caption{Comparison with various node classification models. \rh~marks the methods that are trained in the supervised manner and \bd~marks the methods that are trained in the self-supervised manner.}
\begin{tabular}{clccccc}
\toprule[2pt] 
\textbf{Space}       
& \textbf{Method}             & \textbf{Disease}    &\textbf{Airport}     &\textbf{PubMed}     &\textbf{CiteSeer}  & \textbf{Cora}     \\\midrule 
\multirow{6}{*}{ Euclidean }
& \rh~\textbf{GCN}            & \ms{69.70}{0.40}     &  \ms{81.40}{0.60}   &\ms{81.10}{0.20}    &\ms{70.41}{0.52}    &\ms{81.30}{0.30}   \\
& \rh~\textbf{GAT}            & \ms{70.40}{0.40}     &  \ms{81.50}{0.30}   &\ms{82.00}{0.30}    &\ms{70.14}{0.38}    &\ms{83.00}{0.70}   \\
& \rh~\textbf{SAGE}           & \ms{69.10}{0.60}     &  \ms{82.10}{0.50}   &\ms{80.40}{2.20}    &\ms{69.91}{1.38}    &\ms{77.90}{2.40}   \\
& \bd~\textbf{GRACE}          & \ms{69.61}{0.49}     &  \ms{82.79}{0.40}   &\ms{83.51}{0.37}    &\ms{71.42}{0.64}    &\ms{81.13}{0.44}   \\
& \bd~\textbf{COSTA}          & \ms{67.12}{0.39}     &  \ms{81.19}{0.40}   &\ms{84.31}{0.37}    &\ms{70.77}{0.24}    &\ms{82.14}{0.62}   \\\midrule
\multirow{8}{*}{ Hyperbolic }     
& \rh~\textbf{HNN}            & \ms{75.18}{0.25}     & \ms{80.59}{0.46}    &\ms{76.88}{0.43}    &\ms{59.50}{1.28}    &\ms{54.76}{0.61}   \\ 
& \rh~\textbf{HGNN}           & \ms{81.27}{1.53}     & \ms{84.71}{0.98}    &\ms{80.13}{0.82}    &\ms{69.99}{1.00}    &\ms{78.26}{1.19}   \\
& \rh~\textbf{HGCN}           & \ms{88.16}{0.76}     & \ms{89.26}{1.27}    &\ms{82.53}{0.63}    &\ms{70.34}{0.59}    &\ms{78.03}{0.98}   \\
& \rh~\textbf{HGAT}           & \ms{90.30}{0.62}     & \ms{89.62}{1.23}    &\ms{81.42}{0.36}    &\ms{70.64}{0.30}    &\ms{78.32}{1.39}   \\ 
& \rh~\textbf{HIE}            & \ms{94.50}{0.43}     & \ms{93.55}{1.11}    &\ms{85.14}{0.38}   &\ms{73.43}{0.35}   &\ms{83.47}{0.34}  \\ 
& \bd~\textbf{HGCL}           & \ms{93.42}{0.82}     &\ms{92.35}{1.51}     &\ms{83.14}{0.58}    &\ms{72.11}{0.64}    &\ms{82.37}{0.47}   \\
& \bd~\textbf{HyperGCL}$_{\text{Laplace}}$       & \ums{94.89}{0.45}      & \ubms{94.81}{0.74} &\ums{85.49}{0.22}  &\ums{74.37}{0.42}  &\ums{84.67}{0.50}  \\ 
& \bd~\textbf{HyperGCL}$_{\text{Normal}}$        & \ubms{95.30}{0.43}      & \ums{94.55}{0.81} &\ubms{85.56}{0.21}  &\ubms{74.43}{0.35}  &\ubms{84.77}{0.46}  \\ 
\bottomrule[2pt]
\end{tabular}
\label{tab:node_class_result}
\end{table*}

Figure \ref{sup:add} proves that the analytical distribution in \eqref{eq:theor_dis} matches  simulations in Fig. \ref{fig:gaussain_dis}. We  also verified that $\int_{\mathbf{z}\in\texttt{support}(Z)}p_Z(\mathbf{z})d\mathbf{z}\!=\!1$.$\!\!\!$. Fig. \ref{sup:add} also shows the impact of variance $\sigma$ and curvature $c$ on the distribution imposed in the ambient space of the manifold. In essence we simulate several ``good'' cases (Fig. \ref{fig:sim1d}, \ref{fig:sim2da}, \ref{fig:sim2db}, \ref{fig:simadd3b} \& \ref{fig:simadd4}) and the depth collapse (\ref{fig:simadd1}, \ref{fig:simadd1a}, \ref{fig:simadd2} \& \ref{fig:simadd2b}). Figure \ref{sup:add2} illustrates the impact of non-zero mean, leading to the leaf collapse.

\begin{tcolorbox}[width=1\linewidth, colframe=blackish, colback=beaublue, boxsep=0mm, arc=2mm, left=2mm, right=2mm, top=1mm, bottom=1mm]
\textbf{The Wrapped Normal (Theorem \ref{th:thth}) is Radial.} As $g(\mathbf{z})$ and $\lambda_z^c$ in \eqref{eq:theor_dis} depend on $r\!=\!\lVert\mathbf{z}\rVert_2$, and $p_\mathcal{N}(\log_{\mathbf{0}}^c(\boldsymbol{z}); \boldsymbol{\mu}, \mathbf{\Sigma})\!=\!p_\mathcal{N}(rg(\mathbf{z})\mathbf{z}; \boldsymbol{0}, \mathbf{I})$ for isotropic $(\mathbf{\Sigma},\mathbf{\mu})\!=\!(\mathbf{I},\mathbf{0})$ can be expressed as  $p(r)\!=\!\frac{1}{2\pi}^{d/2}\exp(-\frac{1}{2}r'^2)$ where $r'^2\!=\!\lVert rg(\mathbf{z})\mathbf{z}\rVert_2^2$, that means that the density in \eqref{eq:theor_dis} depends only on a radius, as per notion of ``outer shell isotropy'' in Theorem \ref{th:rdm}.
We also notice from Fig. \ref{fig:sim1d} that $p(r)\!\propto\!1\!+\!\rho^1+\ldots+\rho^{l-1}$ at level $l$ which satisfies the exponential density grow \wrt tree level $l\!\propto\!r$ \& node degree $\rho$.
\end{tcolorbox}

\begin{tcolorbox}[width=1\linewidth, colframe=blackish, colback=beaublue, boxsep=0mm, arc=2mm, left=2mm, right=2mm, top=1mm, bottom=1mm]
\textbf{Tree depth ($1\!+\!\varepsilon$ distortion Delaunay embedding).} If $\epsilon\!=\!0$ in \eqref{eq:poj}, no doubt the geodesic distance between the origin $\mathbf{0}$ and $\mathbf{z}$ on the Poincaré boundary is  $\lim_{\lVert \mathbf{z}\rVert_2\rightarrow 1/\sqrt{c}}D_c(\mathbf{0},\mathbf{z})\!=\!+\infty$. Thus, an embedded tree can be infinitely deep unless the feature space on the boundary is not occupied by features. However, $\epsilon\!>\!0$ \& the outer shell isotropy radius $0\!\ll\!r\!\leq\!(1\!-\!\epsilon)\frac{1}{\sqrt{c}}$ give relation $\infty\!>\!\frac{2}{\sqrt{c}}\tanh^{-1}(\sqrt{c}r)\!\geq\!le\!>\!l\nu\frac{1+\varepsilon}{\varepsilon}$ for the number of tree levels $l$, edges length $e\!>\!\nu\frac{1+\varepsilon}{\varepsilon}$ (``Alg.: Distortion embedding'', page 11 \cite{sarkar2011low}), 
 the distortion factor $1+\varepsilon$ 
and $\nu\!=\!-\log\tan(\beta/2)$ for a cone separation angle $\beta\!<\!\pi/\rho$ for node degree $\rho$.
\end{tcolorbox}



\subsection{Equivalence of Effective Rank in the Tangent Space}

\begin{wrapfigure}{r}{0.45\linewidth}
\vspace{-0.9cm}
   \centering
\begin{subfigure}{0.48\linewidth}
        \centering
        \includegraphics[width=1.0\linewidth]{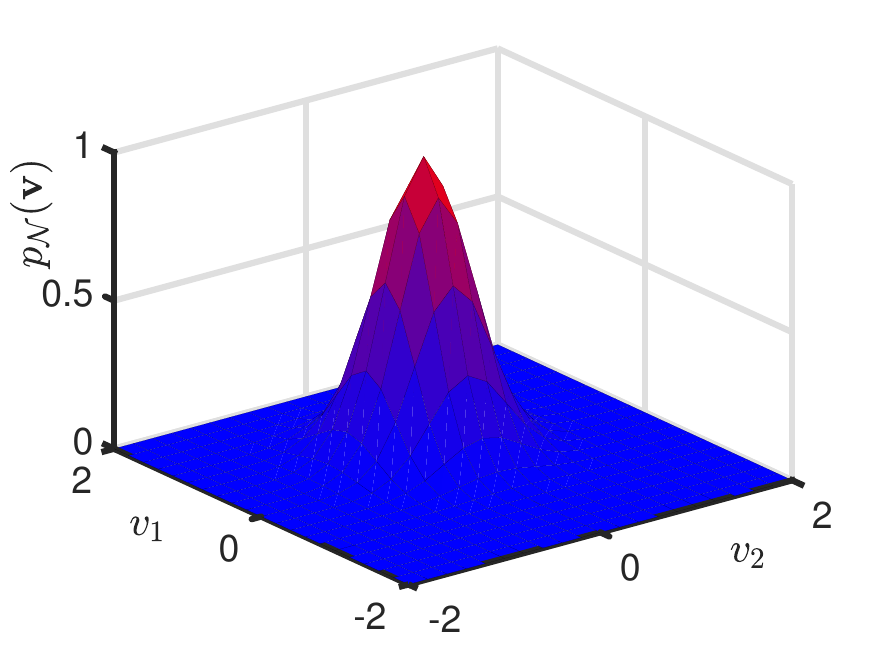} 
        \caption{2D ($\sigma\!=\!0.4, \boldsymbol{\mu}\!=\![0.3, 0.3]$)}
        \label{fig:g1}
\end{subfigure}
\begin{subfigure}{0.48\linewidth}
        \centering
        \includegraphics[width=1.0\linewidth]{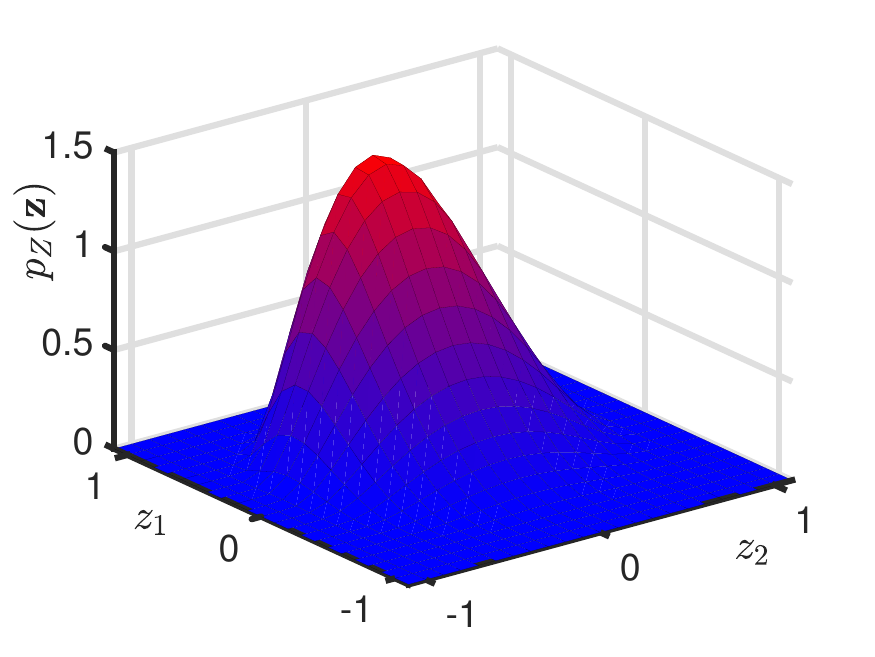}
        \caption{2D ($\sigma\!=\!0.3, \boldsymbol{\mu}\!=\![0.3, 0.3]$)}
        \label{fig:h1}
\end{subfigure}
\begin{subfigure}{0.48\linewidth}
        \centering
        \includegraphics[width=1.0\linewidth]{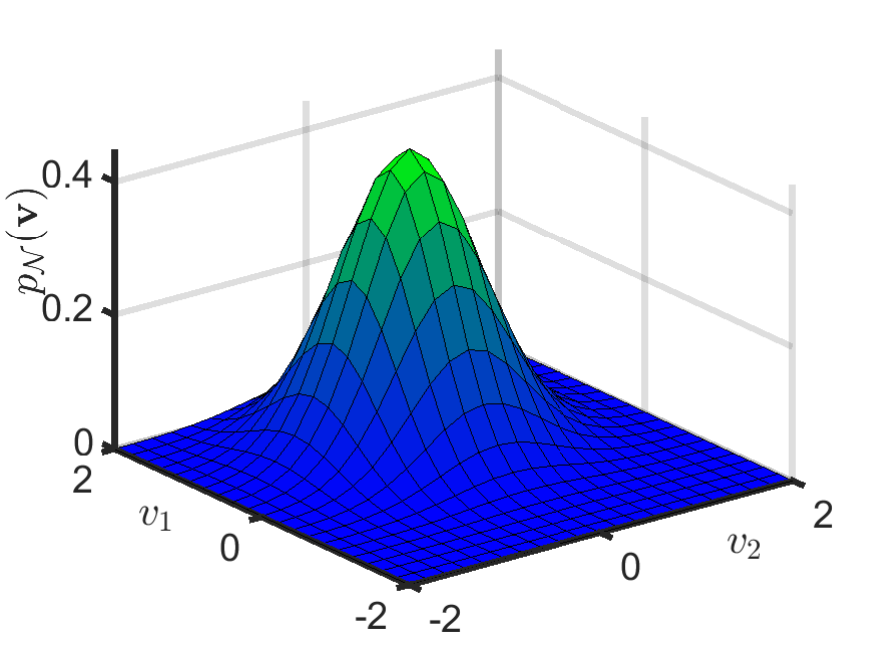} 
        \caption{2D ($\sigma\!=\!0.6, \boldsymbol{\mu}\!=\![0.6, 0.6]$)}
        \label{fig:g2}
\end{subfigure}
\begin{subfigure}{0.48\linewidth}
        \centering
        \includegraphics[width=1.0\linewidth]{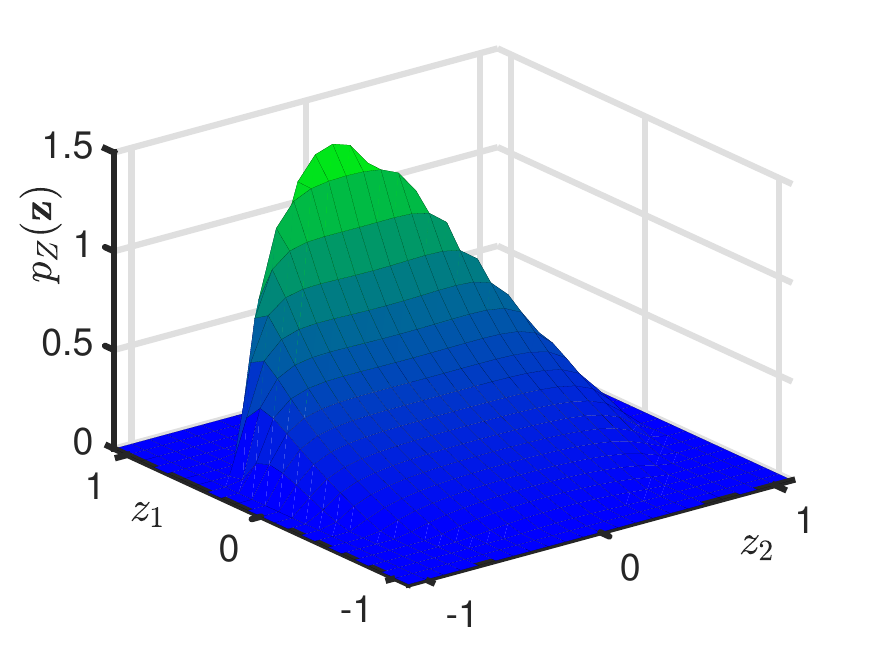}
        \caption{2D ($\sigma\!=\!0.6, \boldsymbol{\mu}\!=\![0.3, 0.66]$)}
        \label{fig:h2}
\end{subfigure}
\begin{subfigure}{0.48\linewidth}
        \centering
        \includegraphics[width=1.0\linewidth]{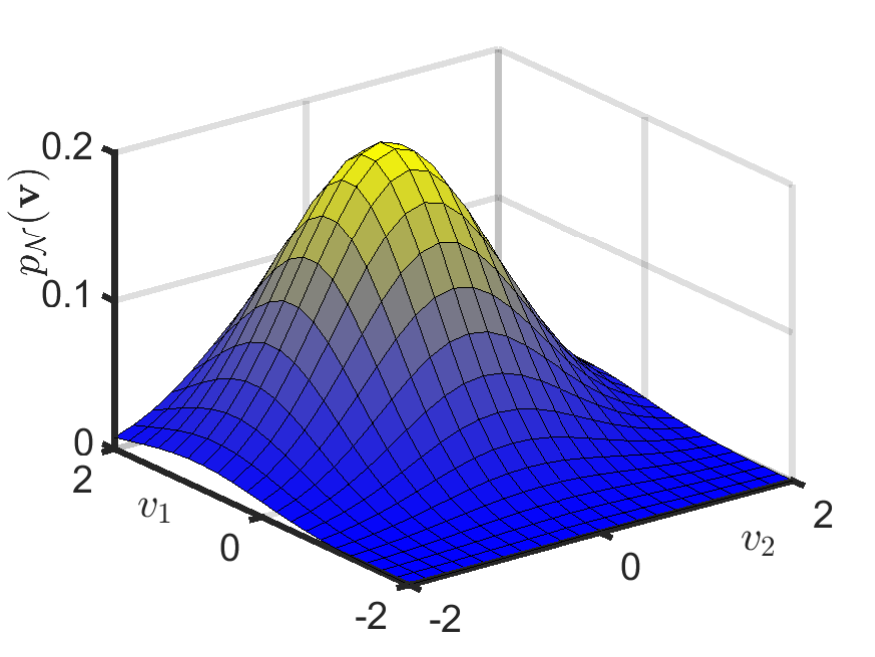}
        \caption{2D ($\sigma\!=\!0.9, \boldsymbol{\mu}\!=\![0.9, 0.9]$)}
        \label{fig:g3}
\end{subfigure}
\begin{subfigure}{0.48\linewidth}
        \centering
        \includegraphics[width=1.0\linewidth]{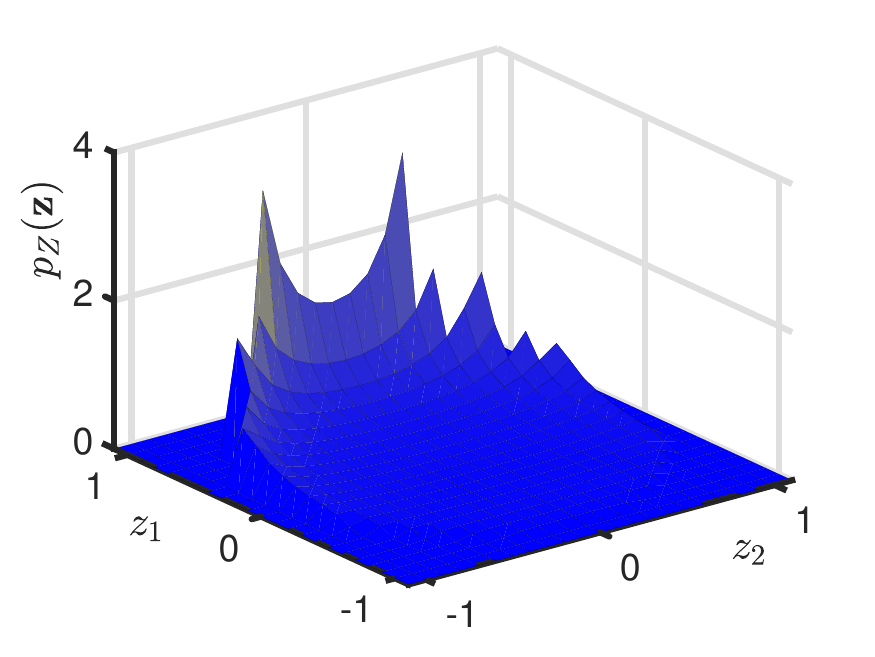}
        \caption{2D ($\sigma\!=\!0.9, \boldsymbol{\mu}\!=\![0.9, 0.9]$)}
        \label{fig:h3}
\end{subfigure}
    \caption{({\em left}) The Normal distribution $p_\mathcal{N}(\mathbf{v})$ of $\mathbf{v}$ imposed in the tangent space, and ({\em right}) the theoretical distribution $p_Z(\mathbf{z})$ of $\mathbf{z}$ (\eqref{eq:theor_dis}) in the ambient space of the hyperbolic manifold.
    We vary the standard deviation $\sigma$ and the mean $\boldsymbol{\mu}$, thus simulating the mean shift leading to the  leaf collapse.
    }
    \label{sup:add2}
   \vspace{-1.2cm}
\end{wrapfigure}

Below we show that minimizing $\mathcal{L}^{\mathbb{D}_c^d}_{U}$ increases the Effective Rank (ERank) which alleviates the dimensional collapse. The ERank in the tangent space   measures the effective dimension of embedding output by the encoder $f_{\Theta}(\cdot)$ in the tangent space. The higher effective rank denotes the lower degree of dimensional collapse. As ERank in the ambient space of the hyperbolic manifold is correlated (by a non-linear mapping) with the ERank in the tangent space, we can directly show how imposing the zero-centered isotropic Normal distribution in the tangent space increases the ERank.  

To show the equivalency between optimizing negative ERank and our loss $\mathcal{L}^{\mathbb{D}_c^d}_{U}$, we  show: i) in Theorem \ref{theorem:convex} that they both are convex, ii) in Theorem \ref{theorem:opt} that they both attain optimum for the uniform spectrum. Firstly, we define the Effective Rank (ERank).
\begin{definition}[Effective Rank (ERank)]
    Let  matrix $\boldsymbol{X} \!\in \!\mathbb{R}^{m \times n}\!$ with  $\boldsymbol{X}\!=\!\boldsymbol{U} \boldsymbol{\Sigma} \boldsymbol{V}^T\!$ as its singular value decomposition, where $\boldsymbol{\Sigma}$ is a diagonal matrix with singular values $\sigma_1\! \geq\!  \cdots\! \geq\!\sigma_Q\! \geq \!0$ with $Q\!=\!\min(m, n)$. The distribution of singular values is defined as the normalized form $p_i=\sigma_i / \sum_{k=1}^Q\left|\sigma_k\right|$. The effective rank of the matrix $\boldsymbol{X}$, 
    is defined as $\operatorname{ERank}(\boldsymbol{X})=\exp \left(H\left(p_1, p_2, \cdots, p_Q\right)\right)$, where $H\left(p_1, p_2, \cdots, p_Q\right)$ is the Shannon entropy  $H\left(p_1, p_2, \cdots, p_Q\right)=-\sum_{k=1}^Q p_k \log p_k$.
\end{definition}

\begin{theorem}[Convexity]Define a loss function $\mathcal{L}_{ERank}(\boldsymbol{\Sigma}) = - \text{ERank}(\boldsymbol{\Sigma})$ on covariance $\boldsymbol{\Sigma}$. Both $\mathcal{L}_{ERank}(\boldsymbol{\Sigma})$ and our loss function $D(\boldsymbol{\Sigma}, \boldsymbol{\mu})$ are convex functions \wrt eigenvalues $\sigma$ of $\boldsymbol{\Sigma}$. 
\label{theorem:convex}
\end{theorem}

\begin{theorem}[Optima] 
Without loss of generality, assume $\operatorname{tr}(\boldsymbol{\Sigma})$ is constrained by some  $d'$. Then both $D(\boldsymbol{\Sigma}, \boldsymbol{0})$ and $\mathcal{L}_{ERank}(\boldsymbol{\Sigma})$ enjoy the same optimum point such that eigenvalues $\sigma_1\!=\!\sigma_2\!=\ldots=\!\sigma_d\!=\!d'/d$.
 \label{theorem:opt}
\end{theorem}

\begin{tcolorbox}[width=1\linewidth, colframe=blackish, colback=beaublue, boxsep=0mm, arc=2mm, left=2mm, right=2mm, top=1mm, bottom=0.5mm]
\textbf{Minimizing our $\mathcal{L}^{\mathbb{D}_c^d}_{U}$ equals Maximizing ERank.} 
 Theorems \ref{theorem:convex} and \ref{theorem:opt} show that both $D(\boldsymbol{\Sigma}, \boldsymbol{0})$ and $\mathcal{L}_{ERank}(\boldsymbol{\Sigma})$ are convex and achieve the optimum for the uniform spectrum $\sigma_1\!=\!\sigma_2\!=\ldots=\!\sigma_d\!=\!d'/d$ (in case of $D(\boldsymbol{\Sigma}, \boldsymbol{0})$, we have $d'\!=\!d$). \textbf{The importance of the above optimum:} our $\mathcal{L}^{\mathbb{D}_c^d}_{U}$ in \eqref{eq:normal4} encourages uniformity of the spectrum, thus increases ERank (low ERank  is known to indicate the dimensional collapse).
 %
 %
\end{tcolorbox}

 Notice the use of ERank directly is impractical: calculating the entropy of the eigenvalues 
 via SVD is time-consuming and numerically unstable in backpropagation due to occurrence of so-called non-simple eigenvalues ($\sigma_i\!=\!\sigma_j, i\!\neq\!j$). In contrast,  $\log\operatorname{det}$ in \eqref{eq:normal4}  computes determinants by the Cholesky factorization~\cite{golub2013matrix} that has stable gradients. Appropriately parameterized LogDet divergence is an upper bound of the negative log of ERank \cite{gap_dc}.


\begin{table*}[!tp]
\caption{Comparison with various competing models. \os~~denotes methods that are designed in the Euclidean space and \cc~~denotes methods that are designed in the hyperbolic space.~\label{tab:hyper_cf}}
\setlength\tabcolsep{4pt}
\fontsize{8}{7}\selectfont
\centering
\begin{tabular}{clccccccc}
\toprule[2pt]
\multicolumn{2}{c}{\textbf{Datasets}}        & \os\textbf{NGCF}   & \os\textbf{LGCN} & \cc\textbf{HAE}    & \cc\textbf{HAVE}   & \cc\textbf{HGCF}               & \cc\textbf{HRCF}             & \cc\textbf{HyperGCL}$_{\text{Normal}}$ \\ \midrule 
\multirow{4}{*}{\textbf{Amazon-CD}}   
                             & R@10  & 0.0758 & 0.0929   & 0.0666 & 0.0781 & 0.0962 & \um{0.1003} &\ubm{0.1069}\\
                             & R@20  & 0.1150 & 0.1404   & 0.0963 & 0.1147 & 0.1455 & \um{0.1503} &\ubm{0.1573}\\ 
                             & N@10  & 0.0591 & 0.0726   & 0.0565 & 0.0629 & 0.0751 & \um{0.0785} &\ubm{0.0825}\\
                             & N@20  & 0.0718 & 0.0881   & 0.0657 & 0.0749 & 0.0909 & \um{0.0947} &\ubm{0.1043}\\ \midrule
\multirow{4}{*}{\textbf{Amazon-Book}}
                             & R@10  & 0.0658 & 0.0799   & 0.0634 & 0.0774 & 0.0867 & \um{0.0900} &\ubm{0.0973}\\
                             & R@20  & 0.1050 & 0.1248   & 0.0912 & 0.1125 & 0.1318 & \um{0.1364} &\ubm{0.1489}\\ 
                             & N@10  & 0.0655 & 0.0780   & 0.0709 & 0.0778 & 0.0869 & \um{0.0902} &\ubm{0.0982}\\
                             & N@20  & 0.0791 & 0.0938   & 0.0789 & 0.0901 & 0.1022 & \um{0.1060} &\ubm{0.1060}\\ \midrule
\multirow{4}{*}{\textbf{Yelp2020}}    
                             & R@10  & 0.0458 & 0.0522   & 0.0360 & 0.0421 & 0.0527 & \um{0.0537} &\ubm{0.0587}\\
                             & R@20  & 0.0764 & 0.0866   & 0.0588 & 0.0691 & 0.0884 & \um{0.0898} &\ubm{0.0950}\\ 
                             & N@10  & 0.0405 & 0.0461   & 0.0331 & 0.0371 & 0.0458 & \um{0.0468} &\ubm{0.0508}\\
                             & N@20  & 0.0513 & 0.0582   & 0.0409 & 0.0465 & 0.0585 & \um{0.0594} &\ubm{0.0639}\\ \bottomrule[2pt]
\end{tabular}%
\end{table*}

\section{Experiments}
\subsection{Results on Graph Representation Learning}\label{sec:result_graph}
\noindent\textbf{Datasets.}
We use the citation networks \textbf{Cora}, \textbf{CiteSeer} \& \textbf{PubMed} whose 
nodes represent scientific papers~\cite{kipf2016semi}. We also use the \textbf{Disease} dataset 
whose node labels  indicate ``infected'' nodes. \textbf{Airport}'s nodes/edges  represent airports/routes from OpenFlights.org~\cite{zhang2018link}. In the node classification (NC), we use 70/15/15\% splits for the Airport dataset, 30/10/60\% splits for Disease. We use standard splits~\cite{kipf2016semi} with 20 train examples per class for Cora, CiteSeer and PubMed. As in \cite{velivckovic2017graph}, we evaluate node classification by measuring accuracy.

\vspace{0.1cm}
\noindent\textbf{Setting and Baselines.} 
Below we investigate both Euclidean embeddings and hyperbolic embeddings. Our model is compared in (i) self-supervised and (ii) supervised (model trained  with labeled data) setting. 
In the self-supervised setting, we follow the linear evaluation from Velivckovic \etal \cite{velivckovic2017graph} and Zhu \etal \cite{zhu2020deep}, where models are first trained in an unsupervised manner. Subsequently, the node representations are fed and evaluated with a logistic regression classifier on labeled data. Note that HyperGCL operates in a self-supervised setting. For Euclidean graph embeddings, we compare HyperGCL against several models. In the supervised setting, we consider GCN~\cite{kipf2016semi}, GAT~\cite{velivckovic2017graph}, SGAE~\cite{hamilton2017inductive}. In the self-supervised setting, we compare HyperGCL with typical graph contrastive learning method ~\cite{zhu2020deep, zhang2022costa}. For hyperbolic graph embeddings, we compare HyperGCL with  the supervised  hyperbolic models  HGCN~\cite{chami2019hyperbolic}, HGAT~\cite{zhang2021hyperbolic}, HGNN~\cite{liu2019hyperbolic}, HIE~\cite{yang2023hyperbolic}. 
We also compare HyperGCL with the self-supervised hyperbolic model HGCL~\cite{liu2022enhancing}.


\vspace{0.1cm}
\noindent\textbf{Results.}  \tabref{tab:node_class_result}  compares HyperGCL to baseline methods.  HyperGCL achieves the highest performance in both the supervised GNN  and self-supervised GCL  settings. Notably, the performance gains of HyperGCL over Euclidean GNN models show  that graph contrastive learning can benefit from the hyperbolic geometry. Moreover, HyperGCL outperforms HGCL highlighting the benefit of hyperbolic alignment and outer shell isotropy losses. 

\vspace{-0.1cm}
\subsection{Result of Representation Learning in Collaborative Filtering}
In large-scale recommender systems, 
user-item relationships often exhibit scale-free or exponential expansion characteristics, making them particularly suitable for the hyperbolic embedding with the added the outer shell isotropy loss, \eg, by adapting HRCF~\cite{yang2022hrcf}. 

\vspace{0.1cm}
\noindent\textbf{Datasets.} We use three publicly available datasets Amazon-Book, Amazon-CD, and Yelp2020, which are also employed by the HRCF. 

\vspace{0.1cm}
\noindent\textbf{Baselines.} To verify the effectiveness of our  method, the compared methods include both well-known or leading hyperbolic models and Euclidean baselines. For hyperbolic models, the HGCF~\cite{sun2021hgcf}, HVAE and HAE~\cite{liang2018variational} and are compared. HAE (HVAE) combines a (variational) autoencoder with hyperbolic geometry. We also include strong Euclidean baselines, \ie, LGCN~\cite{he2020lightgcn} and NGCF~\cite{wang2019neural}. 

\vspace{0.1cm}
\noindent\textbf{Setting.} To show that some equivalent of uniformity in the hyperbolic space is crucial for learning the hierarchical representation, we combine the proposed outer shell isotropy metric with the existing SOTA (\ie, HRCF~\cite{yang2022hrcf}) by adding $\mathcal{L}^{\mathbb{D}_c^d}_{U}$ as an auxiliary loss. We test the  model using the relevancy-based metric Recall@20 and the ranking-aware metric NDCG@20. 
In order to maintain a fair comparison and reduce the workload of our experiments, we closely adhere to the settings of HRCF~\cite{yang2022hrcf}. Specifically, we set the embedding size to 50 and fix the total training epochs at 500. The range of $\lambda$ values in the loss function is \{10, 15, 20, 25, 30\}, while the aggregation order is searched in range from 2 to 10. When it comes to the margin, we explore values within the range of \{0.1, 0.15, 0.2\}. To train the network parameters, we employ Riemannian SGD~\cite{bonnabel2013stochastic} with weight decay, using values from the range \{1e-4, 5e-4, 1e-3\}, along with learning rates of \{0.001, 0.0015, 0.002\}. RSGD is a technique that emulates stochastic gradient descent optimization while accounting for the geometry of the hyperbolic manifold~\cite{bonnabel2013stochastic}. The baseline settings of HAE, HAVE and HGCF are described in ~\cite{sun2021hgcf}.

\begin{table*}[t]
\vspace{-0.5cm}
\centering
\setlength\tabcolsep{2pt}
\fontsize{6}{6}\selectfont
\caption{Ablations on HyperGCL. Erank of embedding is measured in the ambient space (and \textcolor{blue}{(Erank)} in the \textcolor{blue}{tangent space}) of the encoder output. Eranks in  the ambient and tangent spaces correlate  well.}\label{tab:ab_study}
\begin{tabular}{ccccccccccccc}
\toprule[2pt]
\multirow{2}{*}{\textbf{Manifold}}& \multirow{2}{*}{\textbf{Align}} & \multirow{2}{*}{\textbf{Uni}} & \multicolumn{2}{c}{\textbf{PubMed}} &  \multicolumn{2}{c}{\textbf{CiteSeer}} &  \multicolumn{2}{c}{\textbf{Cora}}&  \multicolumn{2}{c}{\textbf{Disease}}&  \multicolumn{2}{c}{\textbf{Airport}}\\ 
\cmidrule{4-5}\cmidrule{6-7}\cmidrule{8-13}
                         &                                     &                                      &  \textbf{Acc.}               & \textbf{Erank}          &  \textbf{Acc.}              & \textbf{Erank}              & \textbf{Acc.}              &\textbf{Erank}& \textbf{Acc.}              &\textbf{Erank}& \textbf{Acc.}              &\textbf{Erank}  \\\midrule 
Euclidean                &$\mathcal{L}^{\mathbb{R}^d}_{A}$     & $\mathcal{L}^{\mathbb{R}^d}_{U}$       &  \ums{83.14}{0.18} &  5.22\textcolor{blue}{(5.20)}              & \ums{71.43}{0.52} & 23.01\textcolor{blue}{(23.03)}              & \ums{82.37}{0.27} &  5.50\textcolor{blue}{(5.47)} &  \ums{73.40}{0.24}& 2.19 \textcolor{blue}{(2.15)} &\ums{81.30}{0.21}& 2.34\textcolor{blue}{(2.32)}\\
Tangent                  &$\mathcal{L}^{\mathbb{R}^d}_{A}$     & $\mathcal{L}^{\mathbb{R}^d}_{U}$       &  \ms{82.34}{0.35}  &  4.79\textcolor{blue}{(4.78)}              &  \ms{71.42}{0.67} & 22.87\textcolor{blue}{(23.02)}              & \ms{81.34}{0.33} &  4.94\textcolor{blue}{(4.93)}  &  \ms{69.42}{0.45}& 2.08\textcolor{blue}{(2.05)} &\ms{79.53}{0.41} & 2.02\textcolor{blue}{(2.01)}\\ \midrule 
Hyperbolic               &$\mathcal{L}^{\mathbb{D}_c^d}_{A}$   & $\times$                               &  \ms{71.02}{0.13}  &   1.22\textcolor{blue}{(1.19)}             &  \ms{63.84}{0.43} & 5.93\textcolor{blue}{(5.92)}               & \ms{72.27}{0.12} &  1.23\textcolor{blue}{(1.22)} &  \ms{42.40}{0.64}& 1.02\textcolor{blue}{(1.01)} &\ms{52.31}{0.31}& 1.06\textcolor{blue}{(1.05)}\\
Hyperbolic               &$\mathcal{L}^{\mathbb{D}_c^d}_{A}$   & $\mathcal{L}^{\mathbb{R}^d}_{U}$     &  \ms{81.51}{0.37}  &  4.43\textcolor{blue}{(4.42)}              &  \ms{70.23}{0.41} & 21.01\textcolor{blue}{(21.01)}              & \ms{81.13}{0.44} &  4.50\textcolor{blue}{(4.47)} &  \ums{90.30}{0.82}& 3.59\textcolor{blue}{(3.58)} &\ums{91.70}{0.71}& 4.15\textcolor{blue}{(4.14)}\\ \midrule
 Hyperbolic  
                         &$\mathcal{L}^{\mathbb{D}_c^d}_{A}$   & $\mathcal{L}^{\mathbb{D}_c^d}_{U}$        &\ubms{85.14}{0.38}  &    6.89\textcolor{blue}{(6.88)}            &\ubms{73.43}{0.66} &24.75\textcolor{blue}{(24.72)}               & \ubms{84.47}{0.16}& 7.76\textcolor{blue}{(7.75)} & \ubms{94.50}{0.43} & 4.79\textcolor{blue}{(4.78)} &\ubms{93.55}{1.11}& 5.25\textcolor{blue}{(5.23)}\\
\bottomrule[2pt]
\end{tabular}
\end{table*}

\vspace{0.1cm}
\noindent\textbf{Results.} \tabref{tab:hyper_cf} shows that HyperGCL  outperforms the baseline models on all datasets across both Recall@20 and NDCG@20. 
Hyperbolic models equipped with the ranking loss (\ie, HGCF and HRCF) show a significant advantage over their Euclidean counterparts (\ie, LGCN), indicating the superiority of hyperbolic geometry for modeling user-item networks.
Using $\mathcal{L}^{\mathbb{D}_c^d}_{U}$ to adjust the embedding space   improves results by alleviating the dimensional collapse. 

\vspace{-0.2cm}
\subsection{Analysis}

\noindent\textbf{Ablation Study.} 
 \tabref{tab:ab_study}  shows that uniformity loss (\eqref{eq:euc_align_uni}) in either the tangent or hyperbolic space do not achieve higher effective rank than our method, and yield low results. 
  There is nothing to prevent the network from learning a constant embedding for all nodes if only the alignment loss (the third row) is used. Applying augmentation prevents the full collapse but its ERank is low. Our HyperGCL (last row) yields a higher ERank and the best results. 

\begin{table}
\setlength\tabcolsep{3pt}
\fontsize{8}{6.5}\selectfont
\caption{Test performance \wrt the isotropic Normal distribution with different mean centers.~\label{tab:mean_gauss}}
    \centering
    \begin{tabular}{lcccccc}
\toprule[2pt]
\multirow{2}{*}{Gaussian}                             &\multicolumn{2}{c}{\textbf{PubMed}}    & \multicolumn{2}{c}{\textbf{CiteSeer}}   & \multicolumn{2}{c}{\textbf{Cora}}     \\
\cmidrule{2-7}
                                     & Acc.          &Erank                    & Acc.          &Erank                  & Acc.          &Erank \\ \midrule
$\mathcal{N}(0.0\cdot\mathbf{1},\mathbf{I}) $       &\ubms{85.56}{0.21} &\textcolor{blue}{\textbf{6.89}}   &\ubms{74.43}{0.35}&24.75   &\ubms{84.77}{0.46} &\textcolor{blue}{\textbf{7.76}} \\ 
$\mathcal{N}(0.5\cdot\mathbf{1},\mathbf{I})$                           &\ums{80.13}{0.24}  &\textcolor{blue}{4.27}       &\ms{69.43}{0.22}  &\textcolor{blue}{18.34}        &\ums{80.27}{0.11}   &\textcolor{blue}{4.31} \\
$\mathcal{N}(1.0\cdot\mathbf{1},\mathbf{I})$                           &\ms{80.79}{0.42}   &\textcolor{blue}{4.20}       &\ums{70.25}{0.15} &\textcolor{blue}{20.65}        &\ms{79.47}{0.35}    &\textcolor{blue}{4.45} \\\midrule
$\mathcal{N}(\mathbf{0},\mathbf{I}_{0.3}) $                            &\ms{84.24}{0.18}   &\textcolor{blue}{5.59}       &\ms{71.82}{0.47}  &\textcolor{blue}{20.41}        &\ms{82.32}{0.13}    &\textcolor{blue}{4.98} \\
$\mathcal{N}(\mathbf{0},\mathbf{I}_{0.5}) $                            &\ms{82.34}{0.23}   &\textcolor{blue}{5.02}       &\ms{70.13}{0.34}  &\textcolor{blue}{16.32}        &\ms{80.17}{0.15}    &\textcolor{blue}{4.20} \\
$\mathcal{N}(\mathbf{0},\mathbf{I}_{0.7}) $                            &\ms{78.24}{0.64}   &\textcolor{blue}{4.15}       &\ms{67.43}{0.35}  &\textcolor{blue}{9.69 }        &\ms{75.25}{0.14}    &\textcolor{blue}{2.85} \\
\bottomrule[2pt]
\end{tabular}
\end{table}


 
\vspace{0.1cm}
\noindent\textbf{The Impact of Gaussian Centering and Non-isotropy (tangent space).}
To  explore  the impact of the above two factors, %
we  scale vector $\mathbf{1}$ by some constant
  factor (see Table \ref{tab:mean_gauss}). To show non-isotropy  has a negative impact on results, we randomly sample a fraction of $p={0.3, 0.5, 0.7}$ from the diagonal elements of  $\mathbf{I}$ and set them to be $0.01$ to simulate the non-isotropic Normal distribution (covariance matrix denoted by $\mathbf{I}_{p}$). When moving from the zero-mean or violating isotropy of Gaussian, the mode collapse in the hyperbolic space occurs, as in Figure \ref{fig:gaussain_dis_a}. Table \ref{tab:mean_gauss} confirms the detrimental impact of non-isotropy and non-zero centering on experimental results.

\begin{tcolorbox}[width=1\linewidth, colframe=blackish, colback=beaublue, boxsep=0mm, arc=2mm, left=2mm, right=2mm, top=1mm, bottom=1mm]
\noindent\textbf{Discussion on ERank}. Table \ref{tab:ab_study} reveals that Effective Ranks in the ambient space and the tangent space are highly correlated (as expected as these spaces are connected via non-linear mapping).  Theorem~\ref{theorem:convex} and Theorem~\ref{theorem:opt} show that imposing the isotropic Gaussian in the tangent space on features improves its ERank. Ergo, we improve the ERank in the ambient space by promoting the outer shell isotropy in the ambient space, which means preventing the leaf and height collapse.
\end{tcolorbox}

\vspace{0.1cm}
\noindent
\textbf{Impact of Curvature $c$.} 
As the curvature parameter $c$ controls the depth of hierarchy (height of the tree embedding)\footnote{As features are not allowed to reach the Poincaré boundary due to $\epsilon\!>\!0$, we can control the depth of the tree embedding.}, we analyze its effect on results. The notion of height-level uniformity is related to the value of $c$. The larger $c$ is, the more concentration of the distribution towards the tree root. \figref{fig:c_effect} shows that HyperGCL achieves the best result for different $c$  meaning that the height-level uniformity is data-dependent and related to sparsity of the datasets (sparsity is indicated in caption brackets of \figref{fig:c_effect}), \eg, graphs with relatively larger density require smaller $c$.

\begin{figure}[t]
    \centering
\begin{subfigure}{0.25\textwidth}
        \centering
        \includegraphics[width=\textwidth]{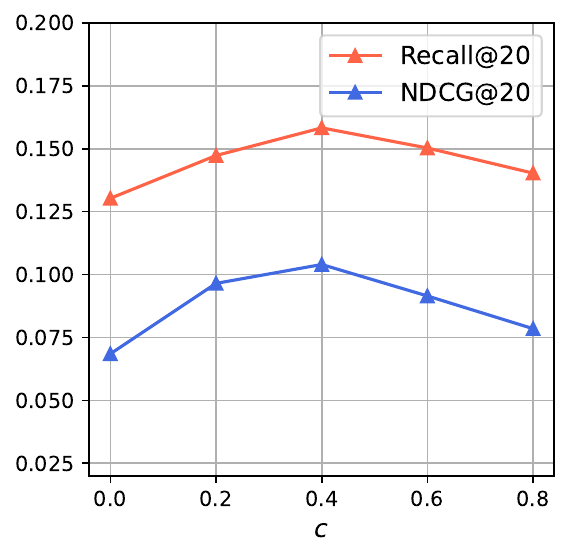}
        \caption{Amazon-CD($0.01$)}
        \label{fig:c_cd}
\end{subfigure}
 \hspace{5pt}
\begin{subfigure}{0.25\textwidth}
        \centering
    \includegraphics[width=\textwidth]{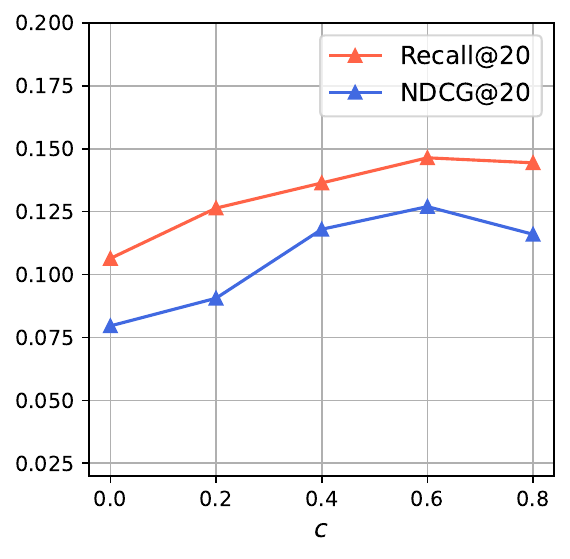}
        \caption{Amazon-BK($0.008$)}
        \label{fig:c_book}
\end{subfigure}
 \hspace{5pt}
\begin{subfigure}{0.25\textwidth}
        \centering
    \includegraphics[width=\textwidth]{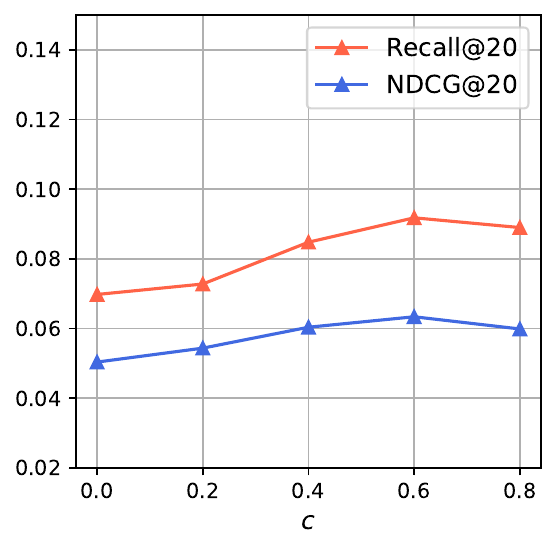}
        \caption{Yelp2020($0.004$)}
        \label{fig:c_yelp}
\end{subfigure}
\caption{Performance \wrt the value of curvature $c$. In caption brackets, we indicate the dataset sparsity.\label{fig:c_effect}}
\end{figure}

\section{Related Works}

\noindent\textbf{Hyperbolic Graph Neural Networks.}
Recent works~\cite{liu2019HGNN,chami2019hyperbolic,zhang2021hyperbolic,he2020lightgcn,liu2022enhancing} extend GNNs to the hyperbolic space. HGNN~\cite{liu2019HGNN}, HGCN~\cite{chami2019hyperbolic}, \& HGAT~\cite{zhang2021hyperbolic} use graph convolutions (GC) in the tangent space.  LGCN~\cite{he2020lightgcn} uses GC on the hyperbolic manifold. HGNN~\cite{liu2019HGNN} 
proposes dynamic graph embeddings. HGAT~\cite{zhang2021hyperbolic} 
introduces a hyperbolic attention-based GC. 
 HGCN~\cite{chami2019hyperbolic} 
develops a learnable curvature model. 
LGCN~\cite{he2020lightgcn} aggregates the neighborhood  via centroid of Lorentzian distance. HGCL~\cite{liu2022enhancing}  uses CL to improve  GCL. 

\vspace{0.1cm}
\noindent\textbf{Graph Contrastive Learning.} 
Inspired by CL methods in vision and NLP~\cite{chen2020simple, gao2021simcse}, CL has also been adapted to the graph domain. By adapting DeepInfoMax~\cite{bachman2019learning} to graph representation learning, DGI~\cite{velickovic2019deep} learns  embedding 
by maximizing the mutual information to discriminate between nodes of original and corrupted graphs. REFINE \cite{zhu2021refine} uses a simple negative sampling term inspired by skip-gram models. Fisher-Bures Adv. GCN \cite{uai_ke}  perturbs  graph Laplacian. Inspired by SimCLR~\cite{chen2020simple}, GRACE~\cite{zhu2020deep} correlates graph views for node-level task 
while GraphCL \cite{hafidi2020graphcl} learns embeddings for graph-level tasks. COLES \cite{zhu2021contrastive}, EASE \cite{Zhu_2022_CVPR} and GLEN \cite{glen} introduce the negative sampling into Laplacian Eigenmaps. Alternatives to CL are based on image/feature masking \cite{hondru2025maskedimagemodelingsurvey,ropim}.

\section{Conclusions}

We believe ours is the first work investigating the notion of dimensional collapse for hyperbolic graph embedding. 
We have shown that i) a poor quality hyperbolic embedding space results in the leaf- and height-level collapse of tree-equivalent cases, and ii) imposing a zero-centered isotropic Normal distribution in the tangent plane at $\boldsymbol{0}$, $\mathcal{T}_{\boldsymbol{0}} \mathbb{D}^d_c$ leads to  the leaf- and height-level uniformity in the hyperbolic space. Such a notion of uniformity translates into the outer shell isotropy in the ambient space of the hyperbolic manifold. Such a zero-centered isotropic Normal distribution can be mapped to the Poincaré disk via the exponential map \& the KL-divergence.

\textbf{Acknowledgments.} The work described in this paper was partially supported by the Research Grants Council of the Hong Kong Special Administrative Region, China (CUHK 2410072, RGC R1015-23). Hao Zhu and Piotr Koniusz are supported by the CSIRO’s Science Digital and Advanced Engineering
Biology Future Science Platforms.

{\small
\bibliographystyle{ieee_fullname}
\bibliography{acmart.bib}

\begin{thebibliography}{10}\itemsep=-1pt

\bibitem{adcock2013tree}
Aaron~B Adcock, Blair~D Sullivan, and Michael~W Mahoney.
\newblock Tree-like structure in large social and information networks.
\newblock In {\em 2013 IEEE 13th International Conference on Data Mining},
  pages 1--10. IEEE, 2013.

\bibitem{bachman2019learning}
Philip Bachman, R~Devon Hjelm, and William Buchwalter.
\newblock Learning representations by maximizing mutual information across
  views.
\newblock {\em arXiv preprint arXiv:1906.00910}, 2019.

\bibitem{bonnabel2013stochastic}
Silvere Bonnabel.
\newblock Stochastic gradient descent on riemannian manifolds.
\newblock {\em IEEE Transactions on Automatic Control}, 58(9):2217--2229, 2013.

\bibitem{bronstein2017geometric}
Michael~M Bronstein, Joan Bruna, Yann LeCun, Arthur Szlam, and Pierre
  Vandergheynst.
\newblock Geometric deep learning: going beyond euclidean data.
\newblock {\em IEEE Signal Processing Magazine}, 34(4):18--42, 2017.

\bibitem{cannon1997hyperbolic}
James~W Cannon, William~J Floyd, Richard Kenyon, Walter~R Parry, et~al.
\newblock Hyperbolic geometry.
\newblock {\em Flavors of geometry}, 31(59-115):2, 1997.

\bibitem{chami2019hyperbolic}
Ines Chami, Zhitao Ying, Christopher R{\'e}, and Jure Leskovec.
\newblock Hyperbolic graph convolutional neural networks.
\newblock {\em Advances in neural information processing systems}, 32, 2019.

\bibitem{chen2020simple}
Ting Chen, Simon Kornblith, Mohammad Norouzi, and Geoffrey Hinton.
\newblock A simple framework for contrastive learning of visual
  representations.
\newblock In {\em International conference on machine learning}, pages
  1597--1607. PMLR, 2020.

\bibitem{chen2020improved}
Xinlei Chen, Haoqi Fan, Ross Girshick, and Kaiming He.
\newblock Improved baselines with momentum contrastive learning.
\newblock {\em arXiv preprint arXiv:2003.04297}, 2020.

\bibitem{cherian2021learning}
Anoop Cherian, Panagiotis Stanitsas, Jue Wang, Mehrtash Harandi, Vassilios
  Morellas, and Nikolaos Papanikolopoulos.
\newblock Learning log-determinant divergences for positive definite matrices.
\newblock {\em IEEE Transactions on Pattern Analysis and Machine Intelligence},
  44(9):5088--5102, 2021.

\bibitem{gap_dc}
Junhao Dong, Piotr Koniusz, Xinghua Qu, and Yew-Soon Ong.
\newblock Stabilizing modality gap \& lowering gradient norms improve zero-shot
  adversarial robustness of vlms.
\newblock {\em 31st SIGKDD Conference on Knowledge Discovery and Data Mining},
  2025.

\bibitem{fang2023poincare}
Pengfei Fang, Mehrtash Harandi, Zhenzhong Lan, and Lars Petersson.
\newblock Poincar{\'e} kernels for hyperbolic representations.
\newblock {\em International Journal of Computer Vision}, 131(11):2770--2792,
  2023.

\bibitem{ganea2018hyperbolic}
Octavian Ganea, Gary B{\'e}cigneul, and Thomas Hofmann.
\newblock Hyperbolic entailment cones for learning hierarchical embeddings.
\newblock In {\em International Conference on Machine Learning}, pages
  1646--1655. PMLR, 2018.

\bibitem{gao2021simcse}
Tianyu Gao, Xingcheng Yao, and Danqi Chen.
\newblock Simcse: Simple contrastive learning of sentence embeddings.
\newblock {\em arXiv preprint arXiv:2104.08821}, 2021.

\bibitem{golub2013matrix}
Gene~H Golub and Charles~F Van~Loan.
\newblock {\em Matrix computations}.
\newblock JHU press, 2013.

\bibitem{grill2020bootstrap}
Jean-Bastien Grill, Florian Strub, Florent Altch{\'e}, Corentin Tallec, Pierre
  Richemond, Elena Buchatskaya, Carl Doersch, Bernardo Avila~Pires, Zhaohan
  Guo, Mohammad Gheshlaghi~Azar, et~al.
\newblock Bootstrap your own latent-a new approach to self-supervised learning.
\newblock {\em Advances in neural information processing systems},
  33:21271--21284, 2020.

\bibitem{hafidi2020graphcl}
Hakim Hafidi, Mounir Ghogho, Philippe Ciblat, and Ananthram Swami.
\newblock Graphcl: Contrastive self-supervised learning of graph
  representations.
\newblock {\em arXiv preprint arXiv:2007.08025}, 2020.

\bibitem{ropim}
Maryam Haghighat, Peyman Moghadam, Shaheer Mohamed, and Piotr Koniusz.
\newblock Pre-training with random orthogonal projection image modeling.
\newblock In {\em The Twelfth International Conference on Learning
  Representations}, 2024.

\bibitem{hamilton2017inductive}
Will Hamilton, Zhitao Ying, and Jure Leskovec.
\newblock Inductive representation learning on large graphs.
\newblock {\em Advances in neural information processing systems}, 30, 2017.

\bibitem{he2020lightgcn}
Xiangnan He, Kuan Deng, Xiang Wang, Yan Li, Yongdong Zhang, and Meng Wang.
\newblock Lightgcn: Simplifying and powering graph convolution network for
  recommendation.
\newblock In {\em Proceedings of the 43rd International ACM SIGIR conference on
  research and development in Information Retrieval}, pages 639--648, 2020.

\bibitem{hondru2025maskedimagemodelingsurvey}
Vlad Hondru, Florinel~Alin Croitoru, Shervin Minaee, Radu~Tudor Ionescu, and
  Nicu Sebe.
\newblock Masked image modeling: A survey.
\newblock {\em arXiv: 2408.06687}, 2025.

\bibitem{jing2021understanding}
Li Jing, Pascal Vincent, Yann LeCun, and Yuandong Tian.
\newblock Understanding dimensional collapse in contrastive self-supervised
  learning.
\newblock {\em arXiv preprint arXiv:2110.09348}, 2021.

\bibitem{kipf2016semi}
Thomas~N Kipf and Max Welling.
\newblock Semi-supervised classification with graph convolutional networks.
\newblock {\em arXiv preprint arXiv:1609.02907}, 2016.

\bibitem{krioukov2010hyperbolic}
Dmitri Krioukov, Fragkiskos Papadopoulos, Maksim Kitsak, Amin Vahdat, and
  Mari{\'a}n Bogun{\'a}.
\newblock Hyperbolic geometry of complex networks.
\newblock {\em Physical Review E}, 82(3):036106, 2010.

\bibitem{liang2018variational}
Dawen Liang, Rahul~G Krishnan, Matthew~D Hoffman, and Tony Jebara.
\newblock Variational autoencoders for collaborative filtering.
\newblock In {\em Proceedings of the 2018 world wide web conference}, pages
  689--698, 2018.

\bibitem{liu2022enhancing}
Jiahong Liu, Menglin Yang, Min Zhou, Shanshan Feng, and Philippe
  Fournier-Viger.
\newblock Enhancing hyperbolic graph embeddings via contrastive learning.
\newblock {\em arXiv preprint arXiv:2201.08554}, 2022.

\bibitem{liu2019spectral}
Kanglin Liu, Wenming Tang, Fei Zhou, and Guoping Qiu.
\newblock Spectral regularization for combating mode collapse in gans.
\newblock In {\em Proceedings of the IEEE/CVF international conference on
  computer vision}, pages 6382--6390, 2019.

\bibitem{liu2019hyperbolic}
Qi Liu, Maximilian Nickel, and Douwe Kiela.
\newblock Hyperbolic graph neural networks.
\newblock {\em Advances in neural information processing systems}, 32, 2019.

\bibitem{liu2019HGNN}
Qi Liu, Maximilian Nickel, and Douwe Kiela.
\newblock Hyperbolic graph neural networks.
\newblock In {\em NeurIPS}, pages 8230--8241, 2019.

\bibitem{NEURIPS2019_0ec04cb3}
Emile Mathieu, Charline Le~Lan, Chris~J. Maddison, Ryota Tomioka, and Yee~Whye
  Teh.
\newblock Continuous hierarchical representations with poincar\'{e} variational
  auto-encoders.
\newblock In H. Wallach, H. Larochelle, A. Beygelzimer, F. d\textquotesingle
  Alch\'{e}-Buc, E. Fox, and R. Garnett, editors, {\em Advances in Neural
  Information Processing Systems}, volume~32. Curran Associates, Inc., 2019.

\bibitem{nickel2017poincare}
Maximillian Nickel and Douwe Kiela.
\newblock Poincaré embeddings for learning hierarchical representations.
\newblock {\em Advances in neural information processing systems}, 30, 2017.

\bibitem{paeng2011brownian}
Seong-Hun Paeng.
\newblock Brownian motion on manifolds with time-dependent metrics and
  stochastic completeness.
\newblock {\em Journal of Geometry and Physics}, 61(5), 2011.

\bibitem{7098875}
Olivier Roy and Martin Vetterli.
\newblock The effective rank: A measure of effective dimensionality.
\newblock In {\em 2007 15th European Signal Processing Conference}, pages
  606--610, 2007.

\bibitem{sarkar2011low}
Rik Sarkar.
\newblock Low distortion delaunay embedding of trees in hyperbolic plane.
\newblock In {\em International symposium on graph drawing}, pages 355--366.
  Springer, 2011.

\bibitem{sun2021hgcf}
Jianing Sun, Zhaoyue Cheng, Saba Zuberi, Felipe P{\'e}rez, and Maksims Volkovs.
\newblock Hgcf: Hyperbolic graph convolution networks for collaborative
  filtering.
\newblock In {\em Proceedings of the Web Conference 2021}, pages 593--601,
  2021.

\bibitem{uai_ke}
Ke Sun, Piotr Koniusz, and Zhen Wang.
\newblock Fisher-bures adversary graph convolutional networks.
\newblock {\em Conference on Uncertainty in Artificial Intelligence},
  115:465--475, 2019.

\bibitem{tifrea2018poincar}
Alexandru Tifrea, Gary B{\'e}cigneul, and Octavian-Eugen Ganea.
\newblock Poincar$\backslash$'e glove: Hyperbolic word embeddings.
\newblock {\em arXiv preprint arXiv:1810.06546}, 2018.

\bibitem{tschannen2019mutual}
Michael Tschannen, Josip Djolonga, Paul~K Rubenstein, Sylvain Gelly, and Mario
  Lucic.
\newblock On mutual information maximization for representation learning.
\newblock {\em arXiv preprint arXiv:1907.13625}, 2019.

\bibitem{velivckovic2017graph}
Petar Veli{\v{c}}kovi{\'c}, Guillem Cucurull, Arantxa Casanova, Adriana Romero,
  Pietro Lio, and Yoshua Bengio.
\newblock Graph attention networks.
\newblock {\em arXiv preprint arXiv:1710.10903}, 2017.

\bibitem{velickovic2019deep}
Petar Velickovic, William Fedus, William~L. Hamilton, Pietro Li{\`{o}}, Yoshua
  Bengio, and R.~Devon Hjelm.
\newblock Deep graph infomax.
\newblock In {\em ICLR}, 2019.

\bibitem{wang2020understanding}
Tongzhou Wang and Phillip Isola.
\newblock Understanding contrastive representation learning through alignment
  and uniformity on the hypersphere.
\newblock In {\em International Conference on Machine Learning}, pages
  9929--9939. PMLR, 2020.

\bibitem{wang2019neural}
Xiang Wang, Xiangnan He, Meng Wang, Fuli Feng, and Tat-Seng Chua.
\newblock Neural graph collaborative filtering.
\newblock In {\em Proceedings of the 42nd international ACM SIGIR conference on
  Research and development in Information Retrieval}, pages 165--174, 2019.

\bibitem{lagranges}
Wikipedia.org.
\newblock Lagrange multiplier, 2024.

\bibitem{wu2018unsupervised}
Zhirong Wu, Yuanjun Xiong, Stella~X Yu, and Dahua Lin.
\newblock Unsupervised feature learning via non-parametric instance
  discrimination.
\newblock In {\em Proceedings of the IEEE conference on computer vision and
  pattern recognition}, pages 3733--3742, 2018.

\bibitem{yang2022hrcf}
Menglin Yang, Min Zhou, Jiahong Liu, Defu Lian, and Irwin King.
\newblock Hrcf: Enhancing collaborative filtering via hyperbolic geometric
  regularization.
\newblock In {\em Proceedings of the ACM Web Conference 2022}, pages
  2462--2471, 2022.

\bibitem{yang2023hyperbolic}
Menglin Yang, Min Zhou, Rex Ying, Yankai Chen, and Irwin King.
\newblock Hyperbolic representation learning: Revisiting and advancing.
\newblock {\em arXiv preprint arXiv:2306.09118}, 2023.

\bibitem{zhang2018link}
Muhan Zhang and Yixin Chen.
\newblock Link prediction based on graph neural networks.
\newblock {\em Advances in neural information processing systems}, 31, 2018.

\bibitem{zhang2021hyperbolic}
Yiding Zhang, Xiao Wang, Chuan Shi, Xunqiang Jiang, and Yanfang Ye.
\newblock Hyperbolic graph attention network.
\newblock {\em IEEE Transactions on Big Data}, 8(6):1690--1701, 2021.

\bibitem{zhang2024geometric}
Yifei Zhang, Hao Zhu, Zixing Song, Yankai Chen, Xinyu Fu, Ziqiao Meng, Piotr
  Koniusz, and Irwin King.
\newblock Geometric view of soft decorrelation in self-supervised learning.
\newblock In {\em Proceedings of the 30th ACM SIGKDD Conference on Knowledge
  Discovery and Data Mining}, pages 4338--4349, 2024.

\bibitem{zhang2022costa}
Yifei Zhang, Hao Zhu, Zixing Song, Piotr Koniusz, and Irwin King.
\newblock {COSTA}: Covariance-preserving feature augmentation for graph
  contrastive learning.
\newblock In {\em Proceedings of the 28th ACM SIGKDD Conference on Knowledge
  Discovery and Data Mining}, pages 2524--2534, 2022.

\bibitem{zhang2023spectral}
Yifei Zhang, Hao Zhu, Zixing Song, Piotr Koniusz, and Irwin King.
\newblock Spectral feature augmentation for graph contrastive learning and
  beyond.
\newblock In {\em Proceedings of the AAAI Conference on Artificial
  Intelligence}, volume~37, pages 11289--11297, 2023.

\bibitem{zhang2023mitigating}
Yifei Zhang, Hao Zhu, Zixing Song, Piotr Koniusz, Irwin King, et~al.
\newblock Mitigating the popularity bias of graph collaborative filtering: A
  dimensional collapse perspective.
\newblock {\em Advances in Neural Information Processing Systems},
  36:67533--67550, 2023.

\bibitem{zhu2021refine}
Hao Zhu and Piotr Koniusz.
\newblock Refine: Random range finder for network embedding.
\newblock In {\em ACM Conference on Information and Knowledge Management},
  2021.

\bibitem{Zhu_2022_CVPR}
Hao Zhu and Piotr Koniusz.
\newblock {EASE}: Unsupervised discriminant subspace learning for transductive
  few-shot learning.
\newblock In {\em Proceedings of the IEEE/CVF Conference on Computer Vision and
  Pattern Recognition (CVPR)}, pages 9078--9088, June 2022.

\bibitem{glen}
Hao Zhu and Piotr Koniusz.
\newblock Generalized laplacian eigenmaps.
\newblock In {\em Advances in Neural Information Processing Systems},
  volume~35, pages 30783--30797. Curran Associates, Inc., 2022.

\bibitem{zhu2021contrastive}
Hao Zhu, Ke Sun, and Peter Koniusz.
\newblock Contrastive laplacian eigenmaps.
\newblock {\em Advances in Neural Information Processing Systems},
  34:5682--5695, 2021.

\bibitem{zhu2020deep}
Yanqiao Zhu, Yichen Xu, Feng Yu, Qiang Liu, Shu Wu, and Liang Wang.
\newblock Deep graph contrastive representation learning.
\newblock {\em arXiv:2006.04131}, 2020.

\end{thebibliography}
}

\appendix

\section{Supplementary Material (Appendices)}

\subsection{KL divergence for the multivatiate Normal distributions}
\label{app:kl-proof1}
The KL divergence between multivariate Normal distributions $p$ and $q$ represented by $\mathcal{N}(\boldsymbol{\mu},\boldsymbol{\Sigma})$ and  $\mathcal{N}(\boldsymbol{0},\boldsymbol{I})$ is derived as:

\vspace{-0.3cm}
\begin{equation}
\setlength{\abovedisplayskip}{3pt}
\begin{aligned}
D_{KL}\left(p, q \right)&= \frac{1}{2}\left[\operatorname{tr}\left(\mathbf{I}^{-1} \boldsymbol{\Sigma}\right)+\left(\boldsymbol{0}-\boldsymbol{\mu}\right)^{T} \mathbf{I}^{-1}\left(\boldsymbol{0}-\boldsymbol{\mu}\right)-d+\log \frac{\operatorname{det} \mathbf{I}}{\operatorname{det}\boldsymbol{\Sigma}}\right]\\[-6px]
=\qquad\! & \\[-8px]
       D(\mathbf{\Sigma},\boldsymbol{\mu}) & = \frac{1}{2}\left[\operatorname{tr}(\boldsymbol{\Sigma})-\log \operatorname{det}(\boldsymbol{\Sigma})-d + \|\boldsymbol{\mu}\|_2^2\right].
        \label{eq:tanh_kl)}\\[-16px]
\end{aligned}
\setlength{\belowdisplayskip}{3pt}
\vspace{0.2cm}
\end{equation}
\subsection{KL Divergence between the multivariate Laplace distributions with $(\mathbf{0},\mathbf{\Sigma})$ and $(\mathbf{0},\mathbf{I})$ parameters}
\label{app:lap}

Recall two Laplace distributions  $\mathcal{L}_{ap}(\mathbf{0},\mathbf{\Sigma})$ and $\mathcal{L}_{ap}(\mathbf{0},\mathbf{I})$ from \eqref{eq:lap1} and \eqref{eq:lap2}:
\begin{align}
&p(\mathbf{y})\!=\!\frac{2}{(2 \pi)^{d/2}\operatorname{det}(\boldsymbol{\Sigma})^{1/2}}\left(\frac{\boldsymbol{y}^T \boldsymbol{\Sigma}^{-1} \mathbf{y}}{2}\right)^{v / 2} \!\!K_v\left(\sqrt{2 \boldsymbol{y}^T \boldsymbol{\Sigma}^{-1} \boldsymbol{y}}\right),\label{eq:lap5}\\
&q(\mathbf{y})\!=\!\frac{2}{(2 \pi)^{d/2}}\left(\frac{\lVert\mathbf{y}\rVert_2^2}{2}\right)^{v / 2} \!\!K_v\left(\sqrt{2}\lVert\mathbf{y}\rVert_2\right)\!=\!\prod\limits_{i=1}^d\frac{1}{2}\exp\big(-\lvert y_i\rvert\big).\label{eq:lap6}
\end{align}

As \eqref{eq:lap6} is isotropic, the value of KL divergence between $p(\mathbf{y})$ and $q(\mathbf{y})$ does not depend on the rotation matrix $\mathbf{U}$ from the SVD decomposition $\mathbf{\Sigma}\!=\!\mathbf{U}\boldsymbol{\Lambda}\mathbf{U}^T$. Therefore, we can readily say that $D_{KL}(p,q)\!=\!D_{KL}(p',q)$ where $p'(\mathbf{y})$ is:
\begin{align}
&p'(\mathbf{y})\!=\!\prod\limits_{i=1}^d\frac{1}{2\sqrt{\lambda_i}}\exp\left(-\Big\lvert \frac{y_i}{\sqrt{\lambda_i}}\Big\rvert\right).\label{eq:lap7}
\end{align}

Then we have:
\begin{equation}
   \begin{aligned}
& D_{KL}(p, q)=\int _{-\infty }^{\infty }p'(\mathbf{y})\ \log \left({\frac {p'(\mathbf{y})}{q(\mathbf{y})}}\right)\ \mathrm {d} \ \!\mathbf{y}\\
&=\operatorname{\mathbb{E}}\limits_{\mathbf{y}\sim\mathcal{L}_{ap}(\mathbf{0},\mathbf{\Sigma})}\Big[\log\big(p'(\mathbf{y})\big)-\log\big(q(\mathbf{y})\big)\Big]\\
&=\operatorname{\mathbb{E}}\limits_{\!\!\!\!\!\!\!\!\!\!\!\!\!\!\!\!(y_1,\ldots,y_d)\sim\left\{\substack{\mathcal{L}_{ap}(0,\sqrt{\lambda_1})\\\times\cdots\times\\\mathcal{L}_{ap}(0,\sqrt{\lambda_d})}\right.}\!\sum\limits_{i=1}^d
\underbrace{
\left[
\log\left(\frac{1}{2\sqrt{\lambda_i}}\right)-\frac{\lvert y_i\rvert}{\sqrt{\lambda_i}}-\log{\frac{1}{2}}+\lvert y_i\rvert
\right]
}_{-\frac{1}{2}\log(\lambda_i)-\frac{\lvert y_i\rvert}{\sqrt{\lambda_i}}+\lvert y_i\rvert }\\
&=-\frac{1}{2}\log\operatorname{det}(\boldsymbol{\Lambda})\!+\!\!\sum\limits_{i=1}^d\!
\left[
\operatorname{\mathbb{E}}\limits_{y_i\sim\mathcal{L}_{ap}(0,\sqrt{\lambda_i})}\lvert y_i\rvert\!-\!\operatorname{\mathbb{E}}\limits_{y_i\sim\mathcal{L}_{ap}(0,1)}\lvert y_i\rvert
\right]\\
&=-\frac{1}{2}\log\operatorname{det}(\boldsymbol{\Lambda})\!+\!\!\sum\limits_{i=1}^d\!
\Big[
\operatorname{\mathbb{E}}\limits_{z_i\sim\mathcal{E}_{xp}\left(0,\lambda_i^{-1/2}\right)} z_i-\operatorname{\mathbb{E}}\limits_{z_i\sim\mathcal{E}_{xp}(0,1)} z_i
\Big]\\
&=-\frac{1}{2}\log\operatorname{det}(\boldsymbol{\Lambda})\!+\!\!\sum\limits_{i=1}^d\!
\left[
\sqrt{\lambda_i}\!-\!1
\right]\!=\!-\frac{1}{2}\log\operatorname{det}(\boldsymbol{\Sigma})\!+\!\operatorname{tr}\big(\boldsymbol{\Sigma}^{1/2}\big)\!-\!d.
\end{aligned} 
\end{equation}

\subsection{Proof of Theorem~\ref{th:rdm}}

\noindent\textbf{Radial component dependence.} Let $\mathbf{v} = r\mathbf{u}$ where $r = |\mathbf{v}|$ and $\mathbf{u}$ is a unit vector. The Jacobian of the transformation $f$ is:
\begin{equation}
J_K(r) = \begin{cases}
\frac{\sinh(\sqrt{|K|}r)}{\sqrt{|K|}r} & \text{for } K < 0 \\
1 & \text{for } K = 0 \\
\frac{\sin(\sqrt{K}r)}{\sqrt{K}r} & \text{for } K > 0
\end{cases}
\end{equation}
The distribution $p_Z(\mathbf{z})$ can be expressed using the change of variables formula:
$p_Z(\mathbf{z}) = p_V(f^{-1}(\mathbf{z})) \cdot |J_K(r)|^{-n}$
where $n$ is the dimension of the manifold.
The radial component of $p_Z(\mathbf{z})$, denoted as $p_R(s)$ where $s = |\mathbf{z}|$, is:
$p_R(s) = C_n \cdot \phi(f^{-1}(s)) \cdot |J_K(f^{-1}(s))|^{-n} \cdot s^{n-1}$
where $C_n$ is the surface area of the unit $(n-1)$-sphere. This shows that the radial component of $p_Z(\mathbf{z})$ depends only on the radial component $\phi(r)$ of $p_V(\mathbf{v})$.

\vspace{0.1cm}
\noindent\textbf{Uniform angular component.} The exponential map $f$ is a radial function, meaning it preserves angles. Since $p_V(\mathbf{v})$ is radially symmetric, for any fixed radius $r$, the probability is uniform over the sphere of radius $r$ in $T_0\mathcal{M}$. This uniformity is preserved by $f$, resulting in a uniform angular distribution for $p_Z(\mathbf{z})$ in $\mathcal{M}$.

\vspace{0.1cm}
\noindent\textbf{Outer shell isotropy.} To prove outer shell isotropy, we need to show that:
$\lim_{|\mathbf{z}| \to R} \frac{p_Z(\mathbf{z}_1)}{p_Z(\mathbf{z}_2)} = 1$
for any $\mathbf{z}_1, \mathbf{z}_2$ with $|\mathbf{z}_1| = |\mathbf{z}_2|$, where $R$ is the maximum radius in $\mathcal{M}$.
Let $s = |\mathbf{z}|$ and $r = f^{-1}(s)$. As $s \to R$:
For $K \leq 0$: $r \to \infty$ and $s \to \infty$
For $K > 0$: $r \to \pi/\sqrt{K}$ and $s \to \pi/(2\sqrt{K})$
In both cases, we have:
$\lim_{s \to R} \frac{p_Z(\mathbf{z}_1)}{p_Z(\mathbf{z}2)} = \lim{s \to R} \frac{\phi(f^{-1}(s)) \cdot |J_K(f^{-1}(s))|^{-n}}{\phi(f^{-1}(s)) \cdot |J_K(f^{-1}(s))|^{-n}} = 1$
This limit holds because:
a) For $K < 0$: As $r \to \infty$, $|J_K(r)| \to 0$ exponentially, dominating any potential growth in $\phi(r)$.
b) For $K = 0$: $|J_K(r)| = 1$ for all $r$, so the ratio depends only on $\phi(r)$, which approaches a constant as $r \to \infty$ for any well-behaved probability distribution.
c) For $K > 0$: As $r \to \pi/\sqrt{K}$, $|J_K(r)| \to 0$, again dominating any behavior of $\phi(r)$.
In all cases, the ratio of probabilities for any two points at the same radius approaches 1 as we move towards the ``boundary'' of the space, demonstrating outer shell isotropy.

\subsection{Proof of Theorem \ref{th:thth}}
\label{app:proof1}
It directly follow from the transformation of random variables. Specifically, $p_Z(\mathbf{z})=p_\mathcal{N}(f^{-1}(\mathbf{z}))\cdot\det\big(\mathbf{J}(f^{-1}(\mathbf{z})) \big)$. Notice that for $f(\boldsymbol{v})=\exp _{\mathbf{0}}^c(\boldsymbol{v})$ the inverse is logarithmic map $f^{-1}(\boldsymbol{z})=\log _{\mathbf{0}}^c(\boldsymbol{z})=\frac{1}{\sqrt{c}\|\mathbf{z}\|_2}\tanh^{-1}(\sqrt{c}\|\mathbf{z}\|_2)\frac{\boldsymbol{z}}{\sqrt{c}\|\boldsymbol{z}\|_2}$. The main difficulty lies with computing the Jacobian $\mathbf{J}(f^{-1}(\mathbf{z}))$ and its determinant $\det\big(\mathbf{J}(f^{-1}(\mathbf{z})) \big)$, which (after crunching some maths) turns out to enjoy a simple analytical form $0.5\,\lambda^c_{\mathbf{z}} \,g^{d-1}(\mathbf{z})$.

\subsection{Proof of Theorem~\ref{theorem:convex}}
\begin{equation}
    \begin{aligned}
        D(\boldsymbol{\Sigma}, \boldsymbol{\mu}) & = \operatorname{tr}(\boldsymbol{\Sigma}) - \log\operatorname{det}(\boldsymbol{\Sigma}) - d + \|\boldsymbol{\mu}\|_2 \\
        & = \sum_{i=1}^d (\sigma_i - \log\sigma_i - 1) + \sqrt{\sum_{i=1}^d\mu_i^2}.
    \end{aligned}
    \end{equation}
As $f(\sigma) = \sigma$ is linear and $g(\sigma) = - \log \sigma$ is convex, $f(\sigma)+g(\sigma)$ is a convex function. According to the linear property of convexity $\log\operatorname{det}(\boldsymbol{\Sigma})$ is also convex function \wrt $\sigma$. Note we set $\|\boldsymbol{\mu}\|_2\!=\!0$. Then we have: 
\begin{equation}
\mathcal{L}_{ERank}(\boldsymbol{\Sigma}) = -\operatorname{ERank}(\boldsymbol{\boldsymbol{\Sigma}}) = -\exp\Big(-\sum_{i=1}^d \sigma_i \log \sigma_i\Big).
\end{equation}
$f(\sigma)\!=\!-\sigma \log \sigma$ is concave function since the second derivative $\medtriangledown^2 f(\sigma)\!=\!-1/\sigma$ is non-positive where $0\leq\!\sigma\!\leq 1$. Obviously, $g(x)\!=\!-\exp(x)$ is also concave and monotone decreasing. Thus, $g(f(\sigma))$ is convex, and so $\mathcal{L}_{ERank}(\boldsymbol{\Sigma})\!=\!-\operatorname{ERank}(\boldsymbol{\boldsymbol{\Sigma}})\!=\!-\exp\big(-\sum_{i=1}^d \sigma_i \log \sigma_i\big)$ is convex.  

\subsection{Proof of Theorem~\ref{theorem:opt}}
\begin{proof}
Apply the constrained optimization with Lagrange multipliers. 
For $D(\boldsymbol{\Sigma}, \boldsymbol{0})$, construct  Lagrangian $-\prod_{i=1}^d\sigma_i+\lambda\,\big(\big(\sum_{i=1}^d\sigma_i\big)-d'\big)$. Setting partial derivatives of the Lagrangian \wrt $\sigma_i, \forall i,$ and \wrt $\lambda$ to zero  yields the optimum for $\sigma_1\!=\!\sigma_2\!=\ldots=\!\sigma_d\!=\!d'/d$.

For $\mathcal{L}_{ERank}(\boldsymbol{\Sigma})$ the poof is similar to optimizing Shannon entropy \cite{lagranges}. Write  Lagrangian $\big(-\sum_{i=1}^d\sigma_i\,\log\sigma_i\big)+\lambda\,\big(\big(\sum_{i=1}^d\sigma_i\big)-d'\big)$. Setting partial derivatives of the Lagrangian \wrt $\sigma_i, \forall i,$ and \wrt $\lambda$ to zero  yields the optimum solution $\sigma_1\!=\!\sigma_2\!=\ldots=\!\sigma_d\!=\!d'/d$.
\end{proof}

\subsection{Discussion on GNN encoder \vs Hyperbolic GNN encoder}~\label{sec:gnn_hypergnn}
 \tabref{tab:full_hyper} shows that the use of the hyperbolic GNN encoder  does not work well. It is known that fully hyperbolic packages implementing fully hyperbilic nets yield lower accuracy than the Euclidean  nets due to the stability/gradient issues and overfitting.

\begin{table}[!ht]
\setlength\tabcolsep{2pt}
\fontsize{9}{10}\selectfont
\centering
\caption{GNN encoder \vs HGCN encoder.}
\begin{tabular}{@{}lccccc@{}}
\toprule[2pt]
\textbf{Ratios}                      & \textbf{Disease} & \textbf{Airport} & \textbf{Cora} &  \textbf{PubMed} & \textbf{CiteSeer} \\ \midrule
\textbf{HGCN}                        & 94.77              & 94.14              & 85.11               & 74.50     & 85.22          \\
\textbf{HyperGCL+$\exp(\cdot)$}      & 95.11              & 94.23              & 85.30               & 74.13     & 84.54          \\
\textbf{HyperGCL}                    & 95.30              & 94.55              & 85.56               & 74.43     & 84.77          \\\bottomrule[2pt]
\end{tabular}%
\label{tab:full_hyper}
\end{table}

Note that the Dimension collapse (DC) on fully hyperbolic GNN still exists. This is due to the alignment loss in contrastive learning which promotes the collapse.  \tabref{tab:dc_hyper_gnn} shows results on HGCN~\cite{chami2019hyperbolic}. The lower ERank in the parentheses the bigger the DC is.
\begin{table}[!ht]
\setlength\tabcolsep{2pt}
\fontsize{9}{10}\selectfont
\centering
\caption{DC in hyperbolic GNN encoder. The ERank is indicated in the parentheses.}
\begin{tabular}{@{}lcc@{}}
\toprule[2pt]
\textbf{Ratios}                                         & \textbf{Disease} & \textbf{Airport}     \\ \midrule
\textbf{HGCN + $\mathcal{L}_{A}^{\mathbb{D}_c^d}$} only & 42.24 (1.07) & 51.89 (1.06)                        \\
\textbf{HyperGCL} + \eqref{eq:align_uni_loss}           & 73.26 (2.23) & 81.70 (2.34)                        \\
\textbf{HGCN}     +  our loss                           & 94.77 (4.57) & 94.14 (5.25)                        \\
\textbf{HyperGCL}                                       & 95.30 (4.78) & 94.55 (5.23)                        \\ \bottomrule[2pt]
\end{tabular}%
\label{tab:dc_hyper_gnn}
\end{table}

\subsection{Discussion on scalability/computational cost of HyperGCL}

\begin{table}[!ht]
\setlength\tabcolsep{2pt}
\fontsize{9}{10}\selectfont
\centering
\caption{Time in milliseconds \wrt to the feature dimension $d$ when computing the determinant.}
\begin{tabular}{@{}lcccccccc@{}}
\toprule[2pt]
\textbf{Dimension}                      & \textbf{5} & \textbf{10} & \textbf{20} &  \textbf{40} & \textbf{80} & \textbf{200} & \textbf{500} & \textbf{1000} \\ \midrule
\textbf{Times(ms)}                      & 0.030             & 0.040              & 0.061               & 0.123     & 0.393     &2.223         & 22.18  & 119.7         \\
\bottomrule[2pt]
\end{tabular}
\label{tab:det_time}
\end{table}

\begin{table}[!ht]
\setlength\tabcolsep{2pt}
\fontsize{9}{10}\selectfont
\centering
\caption{Training time on Cora and PubMed for 30 epochs.}
\begin{tabular}{@{}lcc@{}}
\toprule[2pt]
\textbf{Ratios}                                         & \textbf{Cora} & \textbf{PubMed}     \\ \midrule
\textbf{Naive approch in \eqref{eq:hyper_uni}}          & $>$ 10mins     & $>$ 15mins         \\
\textbf{HyperGCL}                                       & 1min 30s    & 1min 57s              \\ \bottomrule[2pt]
\end{tabular}%
\label{tab:train_time}
\end{table}

The only factor that  may influence the scalability of our method is the determinant computation, which can be efficiently obtained by the Cholesky factorization even for large matrices~\cite{cherian2021learning}.

Table \ref{tab:det_time} shows that our method just adds few milliseconds as the  dimension increases, which is negligible compared with the cost of running the backbone. In contrast to the naive approach in \eqref{eq:hyper_uni}, such as calculating the pair-wise Riemannian distance between all point combinations, our method enjoys lower computational cost. \tabref{tab:train_time} provides the results.

Our method scales well with larger and more complex graphs--the scaling depends  mainly on th underlying GNN backbone. The only extra cost that we incur is due to the  computation of covariance per batch (negligible cost) and the determinant (negligible cost). More nodes lead to more batches. 

\subsection{Additional result on the Lorentz model}
Our framework can easily be extended to other hyperbolic models such as the Lorentz model in \tabref{tab:lorentz}.

\begin{table}[!ht]
\setlength\tabcolsep{2pt}
\fontsize{9}{10}\selectfont
    \centering
        \caption{Results on the Lorentz model.}
    \label{tab:lorentz}
   \begin{tabular}{llllll} 
   \toprule[2pt]
Mehtod & Disease & Airport & PubMed & CiteSeer & Cora \\
\midrule \textbf{HyberGCL+Lorentz} & 95.34 & 94.42 & 85.35 & 74.35 & 84.57 \\
\midrule \textbf{HyberGCL+Poincaré} & 95.30 & 94.55 & 85.56 & 74.43 & 84.77 \\
\bottomrule[2pt]
\end{tabular}
\end{table}

\subsection{Additional result on $d$ dimension of the embeddings}

\begin{table}[!ht]
\setlength\tabcolsep{2pt}
\fontsize{9}{10}\selectfont
    \caption{Result on varying the embedding dimensions.}
    \label{tab:d}
\centering
\begin{tabular}{lcccccc} 
\toprule[2pt]
&\multicolumn{3}{c}{ \textbf{Disease} } & \multicolumn{3}{c}{ \textbf{Airport}} \\
\midrule $d$ & 8 & 32 & 256 & 8 & 32 & 256 \\
\midrule \textbf{GCN} & 76.5 & 77.9 & 78.8 & 71.2 & 85.2 & 85.8 \\
\midrule \textbf{HyperGCL} & 94.4 & 95.5 & 95.7 & 89.8 & 94.4 & 94.5 \\
\midrule \textbf{gap} & $\mathbf{1 7 . 9}$ & $\mathbf{1 7 . 6}$ & $\mathbf{1 6 . 7}$ & $\mathbf{1 8 . 6}$ & $\mathbf{9 . 2}$ & $\mathbf{8 . 7}$\\
\bottomrule[2pt]
\end{tabular}
\end{table}

In ~\tabref{tab:d}, we vary the embedding dimension $d$ and we find that $d>32$ is sufficient. Notice  that the higher accuracy gap is observed for lower dimensions, \eg, for $8$ and $256$ dimension, the accuracy gap is 17.9\% \vs. 16.7\% on Disease and 18.6\% \vs. 8.7\% on Airport, respectively. 
 We note that the Hyperbolic models perform  much better compared to the  Euclidean models on small $d$. This happens as an exponentially larger number of features can be embedded closer to the Poincaré boundary while the Euclidean space has curvature $0$ so the only way of increasing its capacity is through the increase of the feature dimension.

\end{document}